\definecolor{gold}{rgb}{1.0, 0.84, 0.0}
\definecolor{silver}{rgb}{0.75, 0.75, 0.75}
\definecolor{bronze}{rgb}{0.8, 0.5, 0.2}
\newcommand{\E}{\mathbb{E}}
\newcommand{\Var}{\textnormal{Var}}
\DeclareMathOperator*{\argmin}{arg\,min}
\DeclareMathOperator*{\argmax}{arg\,max}
  \newcommand{\cAAAI}[1]{AAAI\ Conference\ on\ Artificial (AAAI)}
\title{Regression-adjusted Monte Carlo Estimators for Shapley Values and Probabilistic Values}
\author{%
  R. Teal Witter \\
  Claremont McKenna College\\
  \texttt{rtealwitter@cmc.edu} \\
  \And
  Yurong Liu \\
  New York University \\
  \texttt{yurong.liu@nyu.edu} \\
  \And
  Christopher Musco \\
  New York University \\
  \texttt{cmusco@nyu.edu} \\
}
\begin{document}

\maketitle

\begin{abstract}
With origins in game theory, probabilistic values like Shapley values, Banzhaf values, and semi-values have emerged as a central tool in explainable AI. They are used for feature attribution, data attribution, data valuation, and more. Since all of these values require exponential time to compute exactly, research has focused on efficient approximation methods using two techniques: Monte Carlo sampling and linear regression formulations. In this work, we present a new way of combining both of these techniques. Our approach is more flexible than prior algorithms, allowing for linear regression to be replaced with any function family whose probabilistic values can be computed efficiently. This allows us to harness the accuracy of tree-based models like XGBoost, while still producing unbiased estimates. From experiments across eight datasets, we find that our methods give state-of-the-art performance for estimating probabilistic values. For Shapley values, the error of our methods can be $6.5\times$ lower than Permutation SHAP (the most popular Monte Carlo method), $3.8\times$ lower than Kernel SHAP (the most popular linear regression method), and $2.6\times$ lower than Leverage SHAP (the prior state-of-the-art Shapley value estimator). For more general probabilistic values, we can obtain error $215\times$ lower than the best estimator from prior work.
\end{abstract}

\section{Introduction}

As AI becomes more prevalent across health care, education, finance, and the legal system, underlying algorithmic mechanisms are growing increasingly complex. Sophisticated computational models frequently make decisions that are opaque and challenging to comprehend. This is unacceptable in contexts where decisions can have profound consequences for individuals: the ability to clearly understand and explain how an algorithmic system reaches its conclusions is paramount. 

One tool that has arisen to address the challenge of understanding model behavior are \emph{probabilistic values}, which include Shapley values, Banzhaf values, and semi-values as special cases \cite{strumbelj2010efficient,lundberg2017unified,lundberg2020local,wang2023databanzhaf}.
Originating from game theory \cite{shapley1951notes}, probabilistic values quantify the contribution of a player by measuring how its addition to a set of other players changes the value of the game. 
Formally, consider a \emph{value function} $v: 2^{[n]} \to \mathbb{R}$ defined on sets $S \subseteq [n]$, where $[n]$ denotes $\{1,\ldots, n\}$. The probabilistic value for player $i \in [n]$ is
\begin{align}
\label{eq:prob_value_intro}
\phi_i(v) = \sum_{S \subseteq [n] \setminus \{i\}}
p_{|S|} [ v(S \cup \{i\})-v(S)]
\end{align}
where $\mathbf{p} = [p_0, \ldots, p_{n-1}]\in [0,1]^n$ is a set of probabilistic weights that satisfy $\sum_{\ell=0}^{n-1} \binom{n-1}{\ell} p_\ell=1$.
We can interpret the $i$th probabilistic value as the \textit{average} marginal contribution of player $i$ to random set $S$, where the distribution over set sizes is specified by $\mathbf{p}$. Different choices of $\mathbf{p}$ yield different variants of probabilistic values \cite{kwon2022beta,kwon2022weightedshap,li2024robust}. For example, to obtain the ubiquitous Shapley values, set $p_{\ell} = \frac1{n} \binom{n-1}{\ell}^{-1}$, and to obtain Banzhaf values, set $p_{\ell} = 1/2^{n-1}$ for all $\ell$. 

Our paper addresses the problem of computing $\phi_1, \ldots, \phi_n$ in full generality for any $\mathbf{p}$. The topic of which weights are best for a given application has received significant attention. Some prior work focuses on axiomatic approaches for choosing $\mathbf{p}$. For example, all probabilistic values satisfy three desirable properties: \textit{null player}, \textit{symmetry}, and \textit{linearity} (see \cite{weber1988probabilistic} for a detailed discussion). Shapley values satisfy an additional \emph{efficiency} property \cite{shapley1951notes} and Banzhaf values satisfy a \emph{2-efficiency} property that might be desirable when there are non-linear interactions between players \cite{penrose1946elementary,banzhaf1964weighted}. Generalizations of these values include Beta Shapley \cite{kwon2022beta} values and weighted Banzhaf values \cite{li2024robust}. See Appendix \ref{appendix:more_probs} for more on these generalizations.

Regardless of how $\mathbf{p}$ is chosen, the meaning of the probabilistic values depends on how the value function, $v$, is defined. For example, a common task in explainable AI is to attribute a model prediction (for a given input) to features \cite{lundberg2017unified}. Here, $v(S)$, is the prediction made when using just the subset of features corresponding to $S$.\footnote{Since most models in machine learning require a full set of input features, features not in $S$ are replaced with either a mean value or random value from the training dataset as a baseline \cite{janzing2020feature,lundberg2018consistent}.}  Probabilistic values are also used in data attribution tasks, where $v(S)$ corresponds to the model loss when training with a given subset of data \cite{ghorbani2019data,wang2025data}. In these applications and others, evaluating $v$ is expensive, as it requires re-running  or possibly even re-training a model. As in prior work on efficient probabilistic value estimation, we thus focus on algorithms that estimate $\phi_1, \ldots, \phi_n$ using as few evaluations of $v$ as possible. We view these evaluations as black-box, designing algorithms that are agnostic to the particular value function $v$, and can thus be applied in a wide range of downstream applications. 

\subsection{Efficiently Computing Probabilistic Values}
For general value functions, exactly computing probabilistic values requires exponential time, as the summation in Equation \eqref{eq:prob_value_intro} involves $O(2^{n})$ terms. When $v$ is a highly structured function, like a linear function or decision tree, more efficient algorithms exist \cite{lundberg2017unified,lundberg2018consistent,lundberg2020local,karczmarz2022improved}. However, given the complexity of modern machine learning models, most prior work focuses on approximation algorithms.

The standard method is to approximate the summation in Equation \eqref{eq:prob_value_intro} via a Monte Carlo estimate obtained from a weighted sample of sets that do not contain $i$  \cite{kwon2022beta,kwon2022weightedshap,li2024one}. Concretely, assume for simplicity that we sample a collection of subsets, $\mathcal{S}_i$, by drawing samples with replacement from a distribution with density $\mathcal{D}: 2^{[n]\setminus\{i\}} \to [0,1]$.\footnote{In order to efficiently sample, $\mathcal{D}$ typically assigns the same density to subsets of the same size.} We then compute the unbiased estimate:
\begin{align}
\label{eq:basic_mc_method}
\tilde{\phi}_i^{\text{MC}} = \frac1{|\mathcal{S}_i|}
\sum_{S \in \mathcal{S}_i}  [v(S \cup \{i\}) - v(S) ] \frac{p_{|S|}}{\mathcal{D}(S)}
\end{align}
We have that $\E[\tilde{\phi}_i^{\text{MC}}] = \phi_i$, and the estimator's variance depends on the choice of sampling distribution $\mathcal{D}$, as well as $[v(S \cup \{i\}) - v(S)]^2$ for all $S \subseteq [n]$. In addition to high-variance in practice\footnote{In general, variance scales with $1/\sqrt{|\mathcal{S}_i|}$, i.e., only as the inverse root of the number of samples.}, a downside of Monte Carlo estimators is that it is difficult to ``reuse'' samples between indices $1, \ldots, n$, as each term in Equation \eqref{eq:basic_mc_method} requires evaluating both $v(S \cup \{i\})$ and $v(S)$ for a particular $i$. Several methods address this issue via ``sample reuse'' \cite{castro2009polynomial}. One technique especially relevant to our work is the \emph{maximum sample reuse} (MSR) method, which was originally applied to Banzhaf values \cite{wang2023databanzhaf}, but generalizes naturally to all probabilistic values \cite{kolpaczki2024approximating,li2024faster,li2024one}. The MSR method draws a single collection of subsets, $\mathcal{S}$, according to $\mathcal{D}:2^{[n]}\to [0,1]$, and computes the estimate:
\begin{align}
\label{eq:basic_msr_method}
\tilde{\phi}_i^{\text{MSR}} = \frac1{|\mathcal{S}|}
\sum_{S \in \mathcal{S}} v(S) \frac{p_{|S|-1} \mathbbm{1}[i \in S] - p_{|S|}\mathbbm{1}[i \notin S] }{\mathcal{D}(S)}.
\end{align}
It can be checked that we still have $\E[\tilde{\phi}_i^{\text{MSR}}] = \phi_i$ for all $i$.
Moreover, every evaluation of the value function, $v(S)$, contributes to the estimate for \emph{all} $i \in [n]$, so we achieve maximum sample reuse. However, the variance of MSR methods scales as a weighted sum of $[v(S)]^2$, which is generally much larger than the difference between nearby values $[v(S \cup \{i\}) - v(S)]^2$.

\textbf{Beyond Monte Carlo.} 
Given the high variance of Monte Carlo methods, an alternate approach based on \emph{regression} has become popular for the special case of Shapley values. In particular, Shapley values are the unique solution to a particular overdetermined linear regression problem \cite{charnes1988extremal}: 
\begin{align}\label{eq:linear_regression_connection}
   \boldsymbol{\phi} = [\phi_1, \ldots, \phi_n] =
   \argmin_{\mathbf{x}: \langle \mathbf{x}, \mathbf{1} \rangle = v([n]) - v(\emptyset)} \| \mathbf{Ax - b} \|_W, 
\end{align}
\vspace{-.75em}

where $\mathbf{A} \in \mathbb{R}^{2^n \times n}$  is a specific structured matrix whose rows correspond to sets $S \subseteq [n]$, $\mathbf{b} \in \mathbb{R}^{2^n}$ is vector whose entries equal $v(S) - v(\emptyset)$, and $\|\cdot\|_W$ is a weighted $\ell_2$ norm. 

The ubiquitous Kernel SHAP algorithm \cite{lundberg2017unified,covert2020improving} takes advantage of the regression formulation  by \emph{approximately solving} Equation \eqref{eq:linear_regression_connection} using a subsample of constraints (and corresponding entries in $\mathbf{b}$), each of which requires evaluating $v(S)$ for a single subset $S$. This approach was recently improved by incorporating leverage score sampling \cite{Sarlos:2006,SpielmanSrivastava:2011}, resulting in the state-of-the-art Leverage SHAP method \cite{musco2024provably}. In addition to inherent sample reuse, the empirical effectiveness of Kernel SHAP and Leverage SHAP seems related to the fact that the accuracy of both methods depends on how well $v$ is approximated by a linear function. Indeed, it can be shown that  if $v$ is exactly linear, both methods return exact Shapley values after just $n$ function evaluations \cite{musco2024provably}. However, even when $v$ is not linear, there are theoretical guarantees on the performance of Kernel SHAP and Leverage SHAP \cite{musco2024provably,chen2025unified}.

The Kernel SHAP approach has been extended to Banzhaf values \cite{liu2025kernelbanzhaffastrobust}, thanks to a similarly elegant regression formulation \cite{hammer1992approximations}. However, extensions to more general probabilistic values have been less effective, failing to outperform Monte Carlo methods \cite{lin2022measuring, li2024faster, li2024one}. A key challenge is that, due to the lack of an efficiency property, generalized linear regression formulations for probabilistic values typically require estimating the \emph{sum} of these values, which introduces another source of error \cite{ruiz1998family}. 
Moreover, even for Shapley and Banzhaf values, a drawback of regression-based methods is that they fail to provide an unbiased estimate for each $\phi_i$. Attempts to fix this issue have generally led to estimates with much higher variance \cite{covert2020improving}.

\begin{figure}
    \centering
    \includegraphics[width=\linewidth]{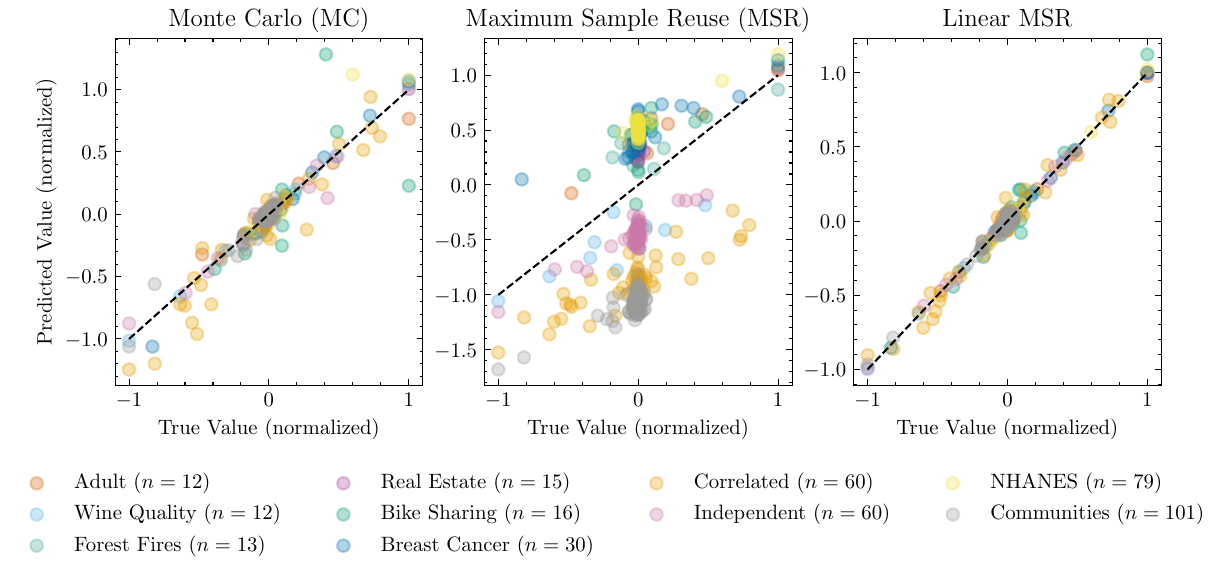}
    \caption{Predicted versus true (normalized) Shapley values for three unbiased estimators given a fixed number of black-box evaluations of the value function, $v$. Each point represents one feature's estimated vs true Shapley value on one dataset. The Monte Carlo estimator uses each sample to estimate only one Shapley value, but has variance that depends on the difference in values between neighboring sets, i.e., $[v(S \cup \{i\}) - v(S)]^2$.
    The Maximum Sample Reuse (MSR) estimator reuses samples, but has larger variance that depends on the magnitude of the values, i.e., $[v(S)]^2$.
    Our Regression MSR estimators reuse samples \textit{and} have smaller variance that depends on how well a learned function $f$ fits the value function $v$, i.e., $[v(S) - f(S)]^2$.
    Even taking $f$ to be linear gives excellent performance (we call this method Linear MSR). Taking $f$ to be a decision-tree model (Tree MSR) can produce even better estimates for large sample sizes, as shown in Figure \ref{fig:shapley_sample_size}.}
    \label{fig:case_study}
    \vspace{-.5em}
\end{figure}

\subsection{Our Contributions}
We introduce a method called \emph{Regression MSR} for leveraging regression to approximate probabilistic values. In contrast to previous work on regression methods, Regression MSR leads to estimates that are unbiased and  easily extend to all probabilistic values.
Moreover, the method can take advantage of non-linear regression methods like XGBoost \cite{chen2016xgboost} and other decision-tree models. 
%To the best of our knowledge, it is the first algorithm for any probabilistic value that does so.

Instead of starting with a custom linear regression formulation for a given type of probabilistic value, Regression MSR uses regression as a \emph{variance reduction} method for Monte Carlo approximation, and specifically, for the Maximum Sample Reuse method introduced earlier. Concretely, learning from a small number of random subsets, we start by approximating the value function $v$ using a simpler function, $f$. Using the fact that the probabilistic values are linear --- i.e., $\phi_i(v) = \phi_i(f) + \phi_i(v - f)$ for any $f$ --- we propose to return the estimator: 
\begin{align}
\label{eq:msr_intro}
\tilde{\phi}_i%^{\text{RegMSR}}
= \phi_i(f) + \frac1{|\mathcal{S}|}
\sum_{S \in \mathcal{S}} [v(S) - f(S)] \frac{p_{|S|-1} \mathbbm{1}[i \in S] - p_{|S|}\mathbbm{1}[i \notin S] }{\mathcal{D}(S)}.
\end{align}
Since the MSR estimator (second term) is consistent --- i.e., returns the true Shapley values when run on all subsets --- the Regression MSR estimator is too.
Further, it can be checked that $\E\left[\tilde{\phi}_i%^{\text{RegMSR}}
\right] = \phi_i$ for any fixed $f$ (e.g., one learned using samples not in $\mathcal{S}$). That is, the method is unbiased. Moreover, like the biased Kernel and Leverage SHAP methods, the variance of $\tilde{\phi}_i%^{\text{RegMSR}}
$ depends on $[v(S) - f(S)]^2$ (see Section \ref{sec:regMSR} for details). So, our method is more accurate than the standard MSR method when we can obtain a good approximation to $v$. 

\begin{figure}[tb!]
    \centering
    \includegraphics[width=\linewidth]{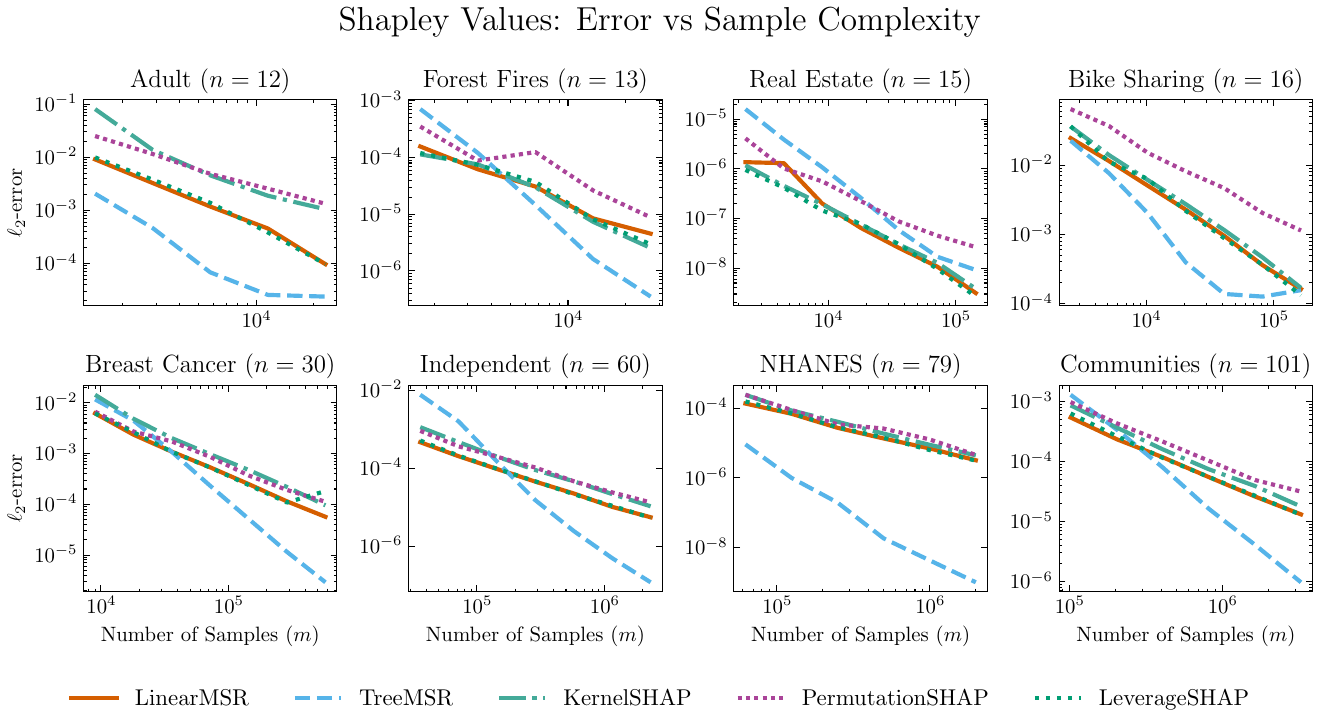}
    \caption{Average $\ell_2$-error between estimated and true Shapley values as a function of sample size $m$ (number of evaluations of $v$) for various datasets. The lines report the mean error over 100 runs, and $m=10n, 20n, 40n, 80n, 160n, 320n, 640n$. Linear MSR consistently performs comparably to the prior state-of-the-art Leverage SHAP. Meanwhile, the performance of Tree MSR depends on how well the tree-based model approximates the value function; with more samples, it can even outperform Leverage SHAP by several orders of magnitude.}
    \label{fig:shapley_sample_size}
\end{figure}

The benefit of our approach is clear in Figure \ref{fig:case_study}, where we take $f$ to be a linear approximation to $v$ and use Equation \eqref{eq:msr_intro} to estimate Shapley values. However, there is also the potential to go beyond linear approximations. Observe that, to evaluate $\tilde{\phi}_i%^{\text{RegMSR}}
$, the function $f$ does not need to be a linear. Indeed, we can use \emph{any approximation for which the $\phi_i(f)$ term in Equation \eqref{eq:msr_intro} can be computed efficiently}. That is, any function family that admits efficient probabilistic value computation. Importantly, this includes a wide variety of functions based on decision trees. Concretely, in Appendix \ref{appendix:tree_prob}, we show how to  efficiently compute probabilistic values for any linear mixture of decision trees.\footnote{Efficient methods for computing Shapley and Banzhaf values for decision trees were previously known \cite{lundberg2018consistent,lundberg2020local,karczmarz2022improved}. However, they were based on a particular summation property that does not hold for general probabilistic values; if a value function only has contributions from $n' < n$ players, the Shapley/Banzhaf value on the induced game of those $n'$ players is the same as the Shapley/Banzhaf value on the original value function with all $n$ players. Our approach in Appendix \ref{appendix:tree_prob} is based on an alternative way of viewing tree-based models that avoids the need for this property.}

We leverage this observation to learn tree-based approximations to $v$ using powerful models like XGBoost (for the purposes of our experiments, we call this variant of our algorithm Tree MSR).
For the well-studied Shapley values, we find that Tree MSR achieves state-of-the-art performance, especially when there are enough samples for the tree-based model to learn an accurate fit, see e.g., Figure \ref{fig:shapley_sample_size}.
In particular, Tree MSR can yield estimates with average error that is $2.6\times$ lower than the prior state-of-the-art Leverage SHAP estimator (see Table \ref{tab:shapley}).
For general probabilistic values, Tree MSR gives up to $215\times$ lower average error than the best estimator from prior work (see Figure \ref{fig:prob_complexity_small_n} and Table \ref{tab:probabilistic_small_n}).

Concurrent to our work, \cite{butler2025proxyspex} introduce Proxy SPEX for estimating probabilistic values.
Like Tree MSR, they fit gradient boosted trees to the value function $v$.
However, instead of computing the probabilistic values of the trees, they extract the most influential Fourier terms and compute the probabilistic values of the Fourier representation.
In terms of performance, Proxy SPEX outperforms Kernel SHAP for the low sample regime with budget $m \leq 5n$, but Kernel SHAP is more accurate for moderate and larger sample regimes \cite{butler2025proxyspex}.
Tree MSR underperforms Kernel SHAP (and hence Proxy SPEX) in the low sample regime, but generally outperforms Leverage SHAP in larger sample regimes (see e.g., Figure \ref{fig:shapley_sample_size}).
Proxy SPEX is neither consistent nor unbiased, unlike Regression MSR.

\section{Regression MSR}\label{sec:regMSR}

In this section, we present our Regression MSR method, which combines the benefits of Monte Carlo and regression-based estimators.
In particular, Regression MSR produces estimates that are unbiased (like Monte Carlo methods), reuses every sample for each estimate (like Maximum Sample Reuse and regression-based methods), and achieves lower variance when a learned approximation is accurate (like regression-based methods).
Unlike prior linear regression-based methods, Regression MSR successfully extends to any probabilistic value, and can harness the accuracy of richer function classes like regression trees.

The pseudocode of Regression MSR appears in Algorithm \ref{alg:ours}.
We separate the samples used to train from the samples in the final prediction; this both ensures the estimator is unbiased, and allows us to give strong theoretical guarantees in Theorem \ref{thm:error_bound}.
First, the algorithm partitions $m$ samples into $k$ collections of samples $\mathcal{S}^{(1)}, \ldots, \mathcal{S}^{(k)}$.
The algorithm then proceeds in three phases, repeated for each $\mathcal{S}^{(\ell)}$:
During the first phase, Regression MSR learns an approximation $f^{(\ell)}$ to the value function $v$, on all samples that are \textit{not} in $\mathcal{S}^{(\ell)}$.
In the second phase, the probabilistic values $\phi_i(f^{(\ell)})$ are computed for all $i$.
(We run Regression MSR with linear or tree-based methods so that computing their probabilistic values is efficient.)
Finally, the algorithm uses the learned function to reduce the variance of the MSR estimates on the samples in $\mathcal{S}^{(\ell)}$.

\begin{algorithm}[t]
    \caption{Regression Maximum Sample Reuse}
    \label{alg:ours}
   \begin{algorithmic}[1]
    \STATE {\bfseries Input:} number of players $n$,
    number of samples $m$, value function $v: 2^{[n]} \to \mathbb{R}$, 
    probabilistic weights $\mathbf{p} \in [0,1]^n$, probability density function for sampling
    $\mathcal{D}: 2^{[n]} \to [0,1]$, number of splits $k$
    \STATE {\bfseries Output:} Estimated probabilistic values $\tilde{\phi}_1, \ldots, \tilde{\phi}_n$
    \STATE Sample $\mathcal{S}$, consisting of $m$ subsets drawn with (or without) replacement from $\mathcal{D}$.
    \STATE Randomly partition $\mathcal{S}$ into $\mathcal{S}^{(1)}, \ldots, \mathcal{S}^{(k)}$. 
     \FOR{$\ell \in \{1,\ldots, k\}$}
         \STATE For $i \in [n]$, initialize $\tilde{\phi}^{(\ell)}_i \gets 0$.
         \STATE Learn $f^{(\ell)}: 2^{[n]} \to \mathbb{R}$ to minimize loss
         \begin{align*}
             \sum_{S \in \cup_{\ell' \neq \ell} \mathcal{S}^{(\ell')}} [v(S) - f(S)]^2.
        \end{align*}        
        \STATE For all $i \in [n]$, compute probabilistic values $\phi_i(f^{(\ell)})$. \hfill $\triangleright$ Efficient for linear/tree-based models.
        \STATE For all $i \in [n]$, compute 
        \begin{align*}
            \tilde{\phi}_i^{(\ell)}
            \gets \phi_i(f^{(\ell)}) + \frac{1}{|\mathcal{S}^{(\ell)}|} \sum_{S \in \mathcal{S}^{(\ell)}} [v(S) - f^{(\ell)}(S)] \frac{p_{|S|-1} \mathbbm{1}[i \in S] - p_{|S|} \mathbbm{1}[i \notin S]}{\mathcal{D}(S)}.
        \end{align*}        
    \ENDFOR 
    \STATE For all $i \in [n]$, compute final estimate $\tilde{\phi}_i \gets \frac1{k} \sum_{\ell} \tilde{\phi}_i^{(\ell)}$.
    \STATE \textbf{return} $\tilde{\phi}_1, \ldots, \tilde{\phi}_n$
   \end{algorithmic}
\end{algorithm}

% Variant without k fold reuse
%\begin{algorithm}[t]
%    \caption{Regression Maximum Sample Reuse}
%    \label{alg:ours}
%   \begin{algorithmic}[1]
%    \STATE {\bfseries Input:} $n$: number of players,
%    $m$: number of samples, value function $v: 2^{[n]} \to \mathbb{R}$, 
%    probabilistic weights $\mathbf{p} \in [0,1]^n$, probability density function for sampling
%    $\mathcal{D}: 2^{[n]} \to [0,1]$.
%    \STATE {\bfseries Output:} Estimated probabilistic values $\tilde{\phi}_1, \ldots, \tilde{\phi}_n$
%    \STATE Sample $\lfloor m/2 \rfloor$ subsets from $\mathcal{D}$ for $\mathcal{S}^\textnormal{train}$.
%    \STATE Learn $f: 2^{[n]} \to \mathbb{R}$ to minimize loss.
%    \begin{align*}
%        \mathcal{L}(f, \mathcal{S}^\text{train})= \sum_{S \in \mathcal{S}^\textnormal{train}} [v(S) - f(S)]^2.
%    \end{align*}
%    \STATE For all $i \in [n]$, compute probabilistic value $\phi_i(f)$  \hfill $\triangleright$ Efficient for linear/tree-based models.
%    \STATE Sample $\lfloor m/2 \rfloor$ subsets from $\mathcal{D}$ for $\mathcal{S}^\textnormal{test}$.
%    \STATE For all $i \in [n]$, compute. 
%    \begin{align*}
%        \tilde{\phi}_i
%            = \phi_i(f) + \frac{1}{|\mathcal{S}^\textnormal{test}|} \sum_{S \in \mathcal{S}^\textnormal{test}} [v(S) - f(S)] \frac{p_{|S|-1} \mathbbm{1}[i \in S] - p_{|S|} \mathbbm{1}[i \notin S]}{\mathcal{D}(S)}.
%    \end{align*}
%    \STATE \textbf{return} $\tilde{\phi}_1, \ldots, \tilde{\phi}_n$
%   \end{algorithmic}
%\end{algorithm}

Theorem \ref{thm:error_bound} gives theoretical guarantees on the performance of Regression MSR.
For a constant error constraint $\epsilon > 0$ and failure probability $\delta > 0$, Regression MSR uses a linear number of samples to produce estimates with $\ell_2$-norm error that depends on a natural weighted squared error between the value function and our worst learned function.
We present the guarantee for any sampling distribution $\mathcal{D}$ over subsets, and, below, discuss our suggested choice of this distribution.

\begin{restatable}[Regression-Adjustment Guarantee]{theorem}{errorbound}\label{thm:error_bound}
    The estimates produced by Algorithm \ref{alg:ours} are unbiased estimates of the probabilistic values.
    Further, let $\epsilon, \delta > 0$, and $f_{\max}$ be the learned function $f^{(\ell)}$ with largest generalization error over $\ell \in [k]$.
    When run with $m = O(n \frac1{\epsilon\delta})$ samples, Algorithm \ref{alg:ours} produces estimates that satisfy, with probability $1-\delta$,
    \begin{align}
        \| \tilde{\bm{\phi}} - \bm{\phi}\|_2^2 \leq 
        \epsilon \sum_{S \subseteq [n]}
        [v(S) - f_{\max}(S)]^2 
        \frac{p_{|S|}^2 (1-\frac{|S|}{n}) + p_{|S|-1}^2 \frac{|S|}{n}}
        {\mathcal{D}(S)}.
        \label{eq:bound}
    \end{align}
\end{restatable}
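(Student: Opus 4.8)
The plan is to establish the two claims separately: first unbiasedness, then the high-probability error bound, the latter via a per-split conditional variance computation combined through convexity.

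\textbf{Unbiasedness.} I would fix a split $\ell$ and condition on the learned function $f^{(\ell)}$. Because $f^{(\ell)}$ is trained only on $\cup_{\ell'\neq\ell}\mathcal{S}^{(\ell')}$, the prediction samples in $\mathcal{S}^{(\ell)}$ are draws from $\mathcal{D}$ that are independent of $f^{(\ell)}$. Writing $g^{(\ell)} = v - f^{(\ell)}$, the Monte Carlo sum defining $\tilde{\phi}_i^{(\ell)}$ is exactly the MSR estimator of Equation \eqref{eq:basic_msr_method} applied to $g^{(\ell)}$. A short computation (split the expectation into sets containing and not containing $i$, then reindex via $T = S\setminus\{i\}$) shows this sum is unbiased for $\phi_i(g^{(\ell)})$. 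By linearity of probabilistic values, $\phi_i(g^{(\ell)}) = \phi_i - \phi_i(f^{(\ell)})$, so $\E[\tilde{\phi}_i^{(\ell)}\mid f^{(\ell)}] = \phi_i(f^{(\ell)}) + \phi_i - \phi_i(f^{(\ell)}) = \phi_i$. Taking expectations over $f^{(\ell)}$ and averaging over $\ell$ then gives $\E[\tilde{\phi}_i] = \phi_i$.

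\textbf{Per-split variance.} Still conditioning on $f^{(\ell)}$, the estimator $\tilde{\phi}_i^{(\ell)}$ is an average of $|\mathcal{S}^{(\ell)}|$ i.i.d. terms, so its conditional variance is $\tfrac{1}{|\mathcal{S}^{(\ell)}|}$ times the per-sample variance, which I would bound by the second moment. The key algebraic step is the identity $(p_{|S|-1}\mathbbm{1}[i\in S] - p_{|S|}\mathbbm{1}[i\notin S])^2 = p_{|S|-1}^2\mathbbm{1}[i\in S] + p_{|S|}^2\mathbbm{1}[i\notin S]$, valid because exactly one indicator is nonzero. Summing the resulting bound over $i\in[n]$ uses $\sum_i \mathbbm{1}[i\in S] = |S|$ and $\sum_i\mathbbm{1}[i\notin S] = n - |S|$, producing the numerator $p_{|S|-1}^2|S| + p_{|S|}^2(n-|S|) = n\big(p_{|S|}^2(1 - \tfrac{|S|}{n}) + p_{|S|-1}^2\tfrac{|S|}{n}\big)$, which is exactly $n$ times the weight in \eqref{eq:bound}. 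Hence, writing $B^{(\ell)}$ for the right-hand side of \eqref{eq:bound} without the factor $\epsilon$ and with $f^{(\ell)}$ in place of $f_{\max}$, I obtain $\sum_i \Var[\tilde{\phi}_i^{(\ell)}\mid f^{(\ell)}] \le \tfrac{n}{|\mathcal{S}^{(\ell)}|}B^{(\ell)}$.

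\textbf{Concentration.} To avoid cross-covariances between the overlapping splits, I would invoke convexity of $\|\cdot\|_2^2$: $\|\tilde{\bm{\phi}} - \bm{\phi}\|_2^2 \le \tfrac1k\sum_\ell \|\tilde{\bm{\phi}}^{(\ell)} - \bm{\phi}\|_2^2$. For each $\ell$, conditional Markov applied with the threshold $\epsilon B^{(\ell)}$ gives $\Pr[\|\tilde{\bm{\phi}}^{(\ell)} - \bm{\phi}\|_2^2 \ge \epsilon B^{(\ell)} \mid f^{(\ell)}] \le \tfrac{n B^{(\ell)}/|\mathcal{S}^{(\ell)}|}{\epsilon B^{(\ell)}} = \tfrac{n}{\epsilon|\mathcal{S}^{(\ell)}|}$; crucially $B^{(\ell)}$ cancels, so this tail bound holds for every realization of $f^{(\ell)}$ and hence unconditionally. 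With $|\mathcal{S}^{(\ell)}| = m/k$, a union bound over the $k$ splits yields failure probability at most $nk^2/(\epsilon m)$, which is at most $\delta$ once $m = O(nk^2/(\epsilon\delta)) = O(n/(\epsilon\delta))$ for constant $k$. On the complementary event every split satisfies $\|\tilde{\bm{\phi}}^{(\ell)} - \bm{\phi}\|_2^2 \le \epsilon B^{(\ell)} \le \epsilon B_{\max}$, so the convexity bound yields $\|\tilde{\bm{\phi}} - \bm{\phi}\|_2^2 \le \epsilon B_{\max}$, which is exactly \eqref{eq:bound}.

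\textbf{Main obstacle.} The principal difficulty is the statistical entanglement from sample splitting: each sampled subset trains $k-1$ of the functions and predicts in the remaining split, so the $\tilde{\bm{\phi}}^{(\ell)}$ are dependent and $f_{\max}$ is itself random. I expect the crux to be the resolution above, namely conditioning on one $f^{(\ell)}$ at a time (legitimate precisely because prediction samples are excluded from that split's training), replacing the variance of the average by the average of variances via convexity, and choosing the per-split Markov threshold proportional to $B^{(\ell)}$ so that the quality of the learned function cancels out of the tail bound. The rest is bookkeeping: confirming the i.i.d. structure survives with-replacement sampling and the random partition, and identifying $\max_\ell B^{(\ell)}$ with the stated $f_{\max}$ of largest generalization error.
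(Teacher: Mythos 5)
Your proposal is correct and follows essentially the same route as the paper's proof: unbiasedness via linearity of probabilistic values plus the MSR expectation computation, a per-split second-moment bound using the identity $(p_{|S|-1}\mathbbm{1}[i\in S]-p_{|S|}\mathbbm{1}[i\notin S])^2 = p_{|S|-1}^2\mathbbm{1}[i\in S]+p_{|S|}^2\mathbbm{1}[i\notin S]$, and then per-split Markov with a union bound over the $k$ splits, requiring $m = O(nk^2/(\epsilon\delta))$. The only cosmetic differences are that you combine the splits via Jensen's inequality on $\|\cdot\|_2^2$ where the paper uses the triangle inequality on norms before squaring, and your conditional Markov step (where $B^{(\ell)}$ cancels) handles the randomness of $f^{(\ell)}$ and $f_{\max}$ slightly more explicitly than the paper does.
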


Algorithm \ref{alg:ours} can be used to make the estimates of any regression-based estimator unbiased while preserving its variance.
To this end, we purposefully do not specify the sampling distribution $\mathcal{D}$ or the function class $f$.
We next discuss two choices for how to select the model, $f$, and collect samples.

\textbf{Linear MSR}
The simplest choice for $f$ is a linear model.
As discussed in the introduction, there is extensive prior work on special linear regression formulations for Shapley values \cite{charnes1988extremal,lundberg2017unified,covert2020improving,musco2024provably}.
Using a linear function for variance reduction in MSR offers a natural alternative to these methods, and adds negligible computational overhead, yet tends to show superior performance on most datasets (see, e.g., Table \ref{tab:shapley}). When applying the method to Shapley values specifically, we use the existing state-of-the art Leverage SHAP method (which samples via leverage scores) to fit the learned function. We similarly use the linear regression-based Kernel Banzhaf algorithm \cite{liu2025kernelbanzhaffastrobust} when estimating Banzhaf values.
For general probabilistic values, Linear MSR fits a linear model with the sampling distribution described below, and uses its predictions to adjust the final estimates.

\begin{table}[tb!]
    \centering
    \caption{Summary statistics of the average $\ell_2$-norm error between estimated and true Shapley values for all datasets listed in Appendix \ref{appendix:datasets}. All estimators are run with $m=40n$ samples. Tree MSR achieves average error that is $6.5\times$ lower than Permutation SHAP, $3.8\times$ lower than Kernel SHAP, and $2.6\times$ lower than the prior state-of-the-art Leverage SHAP. We emphasize that Tree MSR gives even better performance for larger sample sizes, as shown in Figure \ref{fig:shapley_sample_size}. We follow Olympic medal convention: \colorbox{gold!60}{gold}, \colorbox{silver!60}{silver} and \colorbox{bronze!60}{bronze} signify first, second and third best performance, respectively.}
    \vspace{.5em}
    \resizebox{\linewidth}{!}{ 
\begin{tabular} {lcccccccc|c}
\hline
 & \textbf{Adult} & \textbf{Forest Fires} & \textbf{Real Estate} & \textbf{Bike Sharing} & \textbf{Breast Cancer} & \textbf{Independent} & \textbf{NHANES} & \textbf{Communities} & \textbf{Mean} \\ 
\hline 
\textbf{LinearMSR} &  &  &  &  &  &  &  &  \\ 
\hspace{7pt}Mean & \cellcolor{silver!60}$\num{1.18e-03}$ & \cellcolor{bronze!60}$\num{3.07e-05}$ & \cellcolor{bronze!60}$\num{2.00e-07}$ & \cellcolor{silver!60}$\num{5.07e-03}$ & \cellcolor{bronze!60}$\num{1.09e-03}$ & \cellcolor{gold!60}$\num{9.49e-05}$ & \cellcolor{silver!60}$\num{2.73e-05}$ & \cellcolor{bronze!60}$\num{1.17e-04}$ & \cellcolor{silver!60}$\num{9.51e-04}$ \\ 
\hspace{7pt}1st Quartile & \cellcolor{silver!60}$\num{2.99e-04}$ & \cellcolor{bronze!60}$\num{4.72e-07}$ & $\num{2.46e-08}$ & \cellcolor{silver!60}$\num{9.72e-04}$ & \cellcolor{gold!60}$\num{2.14e-04}$ & \cellcolor{bronze!60}$\num{5.38e-05}$ & $\num{2.18e-10}$ & \cellcolor{bronze!60}$\num{4.76e-05}$ & \cellcolor{silver!60}$\num{1.98e-04}$ \\ 
\hspace{7pt}2nd Quartile & \cellcolor{bronze!60}$\num{7.67e-04}$ & \cellcolor{silver!60}$\num{2.75e-06}$ & $\num{5.91e-08}$ & \cellcolor{bronze!60}$\num{2.85e-03}$ & \cellcolor{silver!60}$\num{1.02e-03}$ & \cellcolor{gold!60}$\num{6.64e-05}$ & \cellcolor{bronze!60}$\num{2.49e-06}$ & \cellcolor{bronze!60}$\num{9.46e-05}$ & \cellcolor{bronze!60}$\num{6.00e-04}$ \\ 
\hspace{7pt}3rd Quartile & \cellcolor{silver!60}$\num{1.52e-03}$ & \cellcolor{silver!60}$\num{6.00e-06}$ & \cellcolor{bronze!60}$\num{1.82e-07}$ & \cellcolor{silver!60}$\num{6.37e-03}$ & \cellcolor{bronze!60}$\num{1.71e-03}$ & \cellcolor{gold!60}$\num{1.06e-04}$ & $\num{2.88e-05}$ & \cellcolor{silver!60}$\num{1.45e-04}$ & \cellcolor{silver!60}$\num{1.24e-03}$ \\ 
\hline 
\textbf{TreeMSR} &  &  &  &  &  &  &  &  \\ 
\hspace{7pt}Mean & \cellcolor{gold!60}$\num{6.77e-05}$ & \cellcolor{gold!60}$\num{1.45e-05}$ & $\num{1.07e-06}$ & \cellcolor{gold!60}$\num{2.04e-03}$ & \cellcolor{gold!60}$\num{1.08e-03}$ & \cellcolor{bronze!60}$\num{1.47e-04}$ & \cellcolor{gold!60}$\num{1.95e-07}$ & \cellcolor{gold!60}$\num{7.93e-05}$ & \cellcolor{gold!60}$\num{4.29e-04}$ \\ 
\hspace{7pt}1st Quartile & \cellcolor{gold!60}$\num{1.79e-05}$ & $\num{1.32e-06}$ & $\num{9.50e-08}$ & \cellcolor{gold!60}$\num{6.23e-04}$ & \cellcolor{silver!60}$\num{2.37e-04}$ & \cellcolor{gold!60}$\num{2.40e-05}$ & $\num{2.99e-10}$ & \cellcolor{gold!60}$\num{1.78e-05}$ & \cellcolor{gold!60}$\num{1.15e-04}$ \\ 
\hspace{7pt}2nd Quartile & \cellcolor{gold!60}$\num{4.12e-05}$ & $\num{3.55e-06}$ & $\num{1.97e-07}$ & \cellcolor{gold!60}$\num{1.28e-03}$ & \cellcolor{gold!60}$\num{5.51e-04}$ & \cellcolor{bronze!60}$\num{8.20e-05}$ & \cellcolor{gold!60}$\num{8.89e-10}$ & \cellcolor{gold!60}$\num{3.58e-05}$ & \cellcolor{gold!60}$\num{2.50e-04}$ \\ 
\hspace{7pt}3rd Quartile & \cellcolor{gold!60}$\num{9.03e-05}$ & $\num{1.01e-05}$ & $\num{1.40e-06}$ & \cellcolor{gold!60}$\num{2.44e-03}$ & \cellcolor{gold!60}$\num{1.23e-03}$ & \cellcolor{bronze!60}$\num{1.70e-04}$ & \cellcolor{gold!60}$\num{3.91e-09}$ & \cellcolor{gold!60}$\num{5.70e-05}$ & \cellcolor{gold!60}$\num{5.00e-04}$ \\ 
\hline 
\textbf{KernelSHAP} &  &  &  &  &  &  &  &  \\ 
\hspace{7pt}Mean & $\num{4.55e-03}$ & \cellcolor{silver!60}$\num{2.98e-05}$ & \cellcolor{silver!60}$\num{1.93e-07}$ & \cellcolor{bronze!60}$\num{6.12e-03}$ & $\num{2.01e-03}$ & $\num{1.97e-04}$ & $\num{4.08e-05}$ & $\num{1.59e-04}$ & $\num{1.64e-03}$ \\ 
\hspace{7pt}1st Quartile & $\num{5.14e-04}$ & \cellcolor{silver!60}$\num{3.08e-07}$ & \cellcolor{silver!60}$\num{1.04e-09}$ & $\num{1.40e-03}$ & $\num{6.87e-04}$ & $\num{1.09e-04}$ & \cellcolor{bronze!60}$\num{1.60e-16}$ & $\num{7.03e-05}$ & $\num{3.47e-04}$ \\ 
\hspace{7pt}2nd Quartile & $\num{8.59e-04}$ & \cellcolor{bronze!60}$\num{3.05e-06}$ & \cellcolor{silver!60}$\num{3.50e-08}$ & $\num{4.00e-03}$ & $\num{1.89e-03}$ & $\num{1.64e-04}$ & $\num{3.10e-06}$ & $\num{1.27e-04}$ & $\num{8.81e-04}$ \\ 
\hspace{7pt}3rd Quartile & $\num{2.84e-03}$ & \cellcolor{bronze!60}$\num{7.30e-06}$ & \cellcolor{silver!60}$\num{1.59e-07}$ & $\num{7.91e-03}$ & $\num{2.98e-03}$ & $\num{2.80e-04}$ & $\num{3.97e-05}$ & $\num{2.25e-04}$ & $\num{1.79e-03}$ \\ 
\hline 
\textbf{PermutationSHAP} &  &  &  &  &  &  &  &  \\ 
\hspace{7pt}Mean & $\num{4.86e-03}$ & $\num{1.25e-04}$ & $\num{5.58e-07}$ & $\num{1.51e-02}$ & $\num{1.73e-03}$ & $\num{1.96e-04}$ & $\num{3.43e-05}$ & $\num{2.14e-04}$ & $\num{2.78e-03}$ \\ 
\hspace{7pt}1st Quartile & $\num{1.65e-03}$ & $\num{8.54e-07}$ & \cellcolor{bronze!60}$\num{3.64e-09}$ & $\num{3.13e-03}$ & $\num{2.97e-04}$ & $\num{6.96e-05}$ & \cellcolor{gold!60}$\num{1.60e-16}$ & $\num{5.87e-05}$ & $\num{6.50e-04}$ \\ 
\hspace{7pt}2nd Quartile & $\num{3.84e-03}$ & $\num{4.83e-06}$ & \cellcolor{bronze!60}$\num{4.90e-08}$ & $\num{5.97e-03}$ & \cellcolor{bronze!60}$\num{1.05e-03}$ & $\num{1.70e-04}$ & \cellcolor{silver!60}$\num{2.10e-06}$ & $\num{1.61e-04}$ & $\num{1.40e-03}$ \\ 
\hspace{7pt}3rd Quartile & $\num{7.68e-03}$ & $\num{1.52e-05}$ & $\num{2.69e-07}$ & $\num{1.92e-02}$ & $\num{1.97e-03}$ & $\num{2.77e-04}$ & \cellcolor{silver!60}$\num{2.09e-05}$ & $\num{2.78e-04}$ & $\num{3.68e-03}$ \\ 
\hline 
\textbf{LeverageSHAP} &  &  &  &  &  &  &  &  \\ 
\hspace{7pt}Mean & \cellcolor{bronze!60}$\num{1.38e-03}$ & $\num{3.71e-05}$ & \cellcolor{gold!60}$\num{1.44e-07}$ & $\num{6.32e-03}$ & \cellcolor{silver!60}$\num{1.08e-03}$ & \cellcolor{silver!60}$\num{9.62e-05}$ & \cellcolor{bronze!60}$\num{2.83e-05}$ & \cellcolor{silver!60}$\num{1.15e-04}$ & \cellcolor{bronze!60}$\num{1.13e-03}$ \\ 
\hspace{7pt}1st Quartile & \cellcolor{bronze!60}$\num{3.35e-04}$ & \cellcolor{gold!60}$\num{3.07e-07}$ & \cellcolor{gold!60}$\num{7.88e-10}$ & \cellcolor{bronze!60}$\num{1.05e-03}$ & \cellcolor{bronze!60}$\num{2.74e-04}$ & \cellcolor{silver!60}$\num{5.32e-05}$ & \cellcolor{silver!60}$\num{1.60e-16}$ & \cellcolor{silver!60}$\num{4.41e-05}$ & \cellcolor{bronze!60}$\num{2.20e-04}$ \\ 
\hspace{7pt}2nd Quartile & \cellcolor{silver!60}$\num{6.62e-04}$ & \cellcolor{gold!60}$\num{2.22e-06}$ & \cellcolor{gold!60}$\num{3.21e-08}$ & \cellcolor{silver!60}$\num{2.73e-03}$ & $\num{1.09e-03}$ & \cellcolor{silver!60}$\num{7.30e-05}$ & $\num{2.70e-06}$ & \cellcolor{silver!60}$\num{9.36e-05}$ & \cellcolor{silver!60}$\num{5.81e-04}$ \\ 
\hspace{7pt}3rd Quartile & \cellcolor{bronze!60}$\num{1.62e-03}$ & \cellcolor{gold!60}$\num{5.13e-06}$ & \cellcolor{gold!60}$\num{1.29e-07}$ & \cellcolor{bronze!60}$\num{7.03e-03}$ & \cellcolor{silver!60}$\num{1.46e-03}$ & \cellcolor{silver!60}$\num{1.08e-04}$ & \cellcolor{bronze!60}$\num{2.62e-05}$ & \cellcolor{bronze!60}$\num{1.52e-04}$ & \cellcolor{bronze!60}$\num{1.30e-03}$ \\ 
\hline
\end{tabular}}
    \label{tab:shapley}
\end{table}

\textbf{Tree MSR}
Beyond linear models, tree-based regression models like XGBoost can learn more accurate approximations.
As discussed in the introduction, it is known how to efficiently compute the Shapley values \cite{lundberg2020local} and Banzhaf values \cite{karczmarz2022improved} of trees; however, it was previously unclear how to generalize these approaches to probabilistic values:
the Shapley/Banzhaf methods use the property that the probabilistic value of a value function on $n$ players is the probabilistic value of the extended value function on $n' > n$ players, as long as the additional $n'-n$ players always contribute nothing.
This property unfortunately does not hold for all probabilistic values; e.g., consider probabilistic weights $\mathbf{p}$ that are independently sampled (and normalized) for each number of players $n$ and $n'$.
Instead, efficiently computing the probabilistic values of trees requires a subtly different approach, which we describe in Appendix \ref{appendix:tree_prob}.
With this approach in hand, we can fit $v$ with a tree-based model like XGBoost and efficiently compute its probabilistic values for the final estimate.

\textbf{Sampling Distribution}
When Algorithm \ref{alg:ours} is not run on top of another regression-based estimator, we choose the sampling distribution so that the function $f$ is directly trained to minimize the error bound in Theorem \ref{thm:error_bound}.
That is, we sample each set with probability proportional to
\begin{align*}
    \sqrt{p_{|S|}^2 (1 - \frac{|S|}{n}) + p_{|S|-1}^2 \frac{|S|}{n}}.
\end{align*}
Then the error bound in Theorem \ref{thm:error_bound} is proportional to
\begin{align*}
    \sum_{S \subseteq [n]}
    [v(S) - f(S)]^2 
    \sqrt{p_{|S|}^2 (1 - \frac{|S|}{n}) + p_{|S|-1}^2 \frac{|S|}{n}}
\end{align*}
which, by design, is the \textit{expected} loss used to train $f$.

\paragraph{Bias vs. Accuracy vs. Runtime}
Regression MSR produces unbiased estimates by training $k$ functions, each on a $(k-1)/k$ fraction of the available samples and evaluating on the held-out $1/k$. This creates a trade-off: increasing $k$ improves accuracy (each function has a larger training set) but raises computational cost.
In our experiments, we set $k=10$, meaning that each function is trained on $90\%$ of the data. Thanks to efficient solvers (e.g., least squares or XGBoost), this setup maintains fast runtimes while delivering high accuracy.

\paragraph{Practical Simplification}
Unless a small bias term (similar to that of Leverage SHAP or Kernel SHAP) is unacceptable, we recommend simplifying the Regression MSR algorithm:
Train a single function---and build the final estimate with it---on all samples.
While this introduces a small bias (and breaks the theoretical guarantees), the resulting algorithm runs faster and is generally more accurate than the version described in this paper.

\begin{figure}[tb!]
    \centering
    \includegraphics[width=\linewidth]{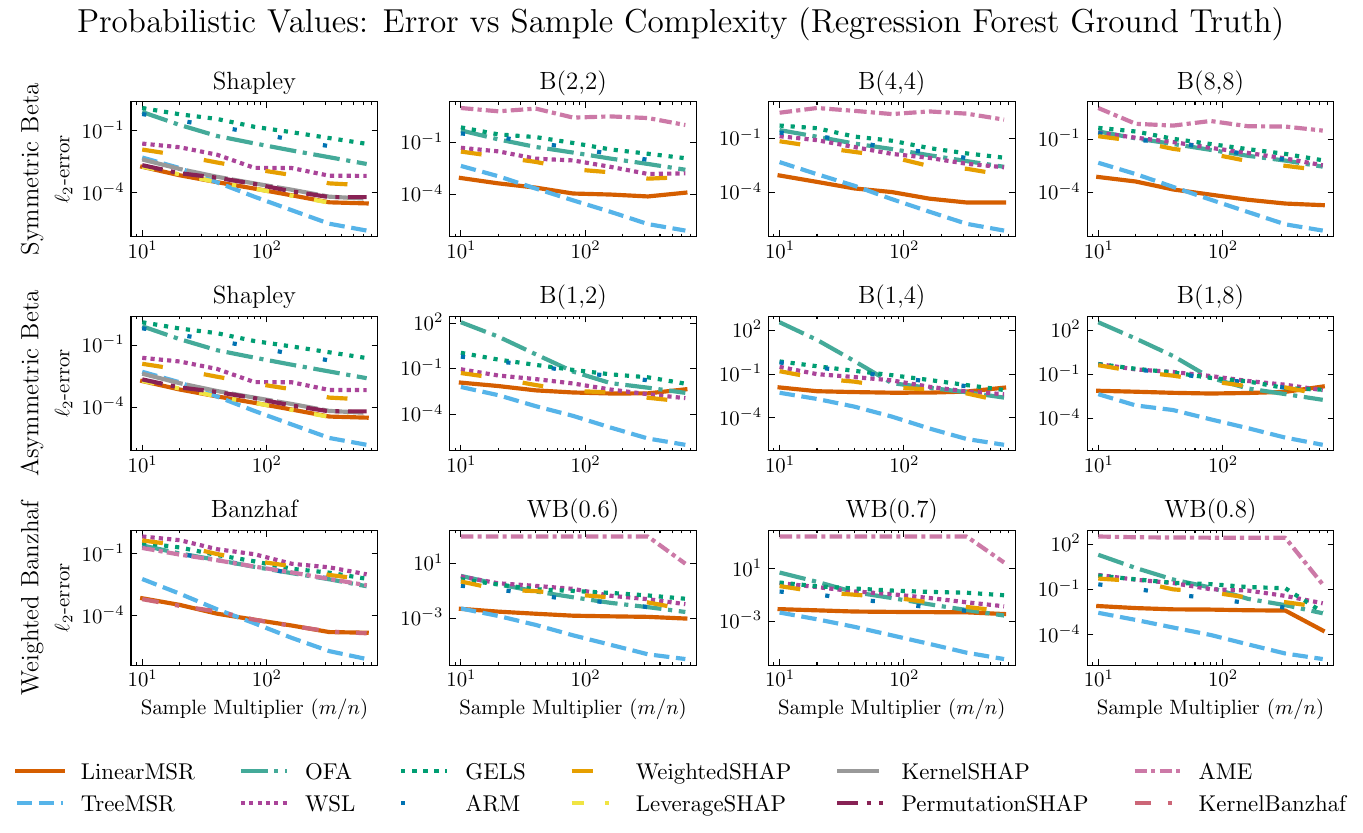}
    \caption{Average error between the estimated and true probabilistic values as a function of sample size. Each subplot shows results for a different probabilistic value with the error averaged over all large datasets ($n \geq 30$), for which we used the tree-based method described in Appendix \ref{appendix:tree_prob}. The lines report the mean error over 10 runs. Tree MSR gives the best performance, often by several orders of magnitude when $m$ is large.}
    \label{fig:prob_complexity_big_n}
\end{figure}

\section{Experiments}\label{sec:experiments}

In this section, we describe our experiments on eight datasets.
Overall, we find that Linear MSR and Tree MSR give state-of-the-art performance for almost all datasets and sample budgets.\footnote{The code is available at \url{https://github.com/rtealwitter/regressionMSR}.}

\textbf{Value Function}
For evaluation, we focus on the explainable AI feature attribution task, but emphasize that our methods can be applied to any application involving probabilistic values, as we only require black-box access to the value function $v$.
Concretely, we train a model on a dataset, and attribute the prediction the model makes on a given \textit{explicand} point $\mathbf{x}^e \in \mathbb{R}^n$ to its $n$ input features.
We consider the \textit{interventional} definition of $v$, where the explanation is relative to a \textit{baseline} point $\mathbf{x}^b \in \mathbb{R}^n$:
For a set $S$, let $\mathbf{x}^S$ be the point where the $i$th feature is $x_i^e$ if $i \in S$ and $x_i^b$ otherwise.
Then, the value function $v(S)$ is the model's prediction on $\mathbf{x}^S$.
There is also a conditional version of feature attribution, where the features not in $S$ are drawn from a background dataset \cite{lundberg2017unified,lundberg2018consistent}.
However, we choose to focus on the interventional version since it is more efficient to compute $v(S)$, and the resulting probabilistic values are more interpretable \cite{janzing2020feature}.

\begin{table}[tb!]
    \centering
    \caption{Summary statistics of the $\ell_2$-norm error between estimated and true probabilistic values when $m=40n$. We summarize the error over large datasets $(n \geq 30)$, for which we use the tree-based method described in Appendix \ref{appendix:tree_prob} to compute the true probabilistic values. On average over all probabilistic values, Tree MSR produces estimates with mean error that is $215\times$ lower than the best estimator from prior work.}
    \vspace{.5em}
    \resizebox{\linewidth}{!}{ 
\begin{tabular} {lcccccccccccc|c}
\hline
 & \textbf{B(1,1)} & \textbf{B(2,2)} & \textbf{B(4,4)} & \textbf{B(8,8)} & \textbf{B(1,2)} & \textbf{B(1,4)} & \textbf{B(1,8)} & \textbf{WB(0.5)} & \textbf{WB(0.6)} & \textbf{WB(0.7)} & \textbf{WB(0.8)} & \textbf{WB(0.9)} & \textbf{Mean} \\ 
\hline 
\textbf{LinearMSR} &  &  &  &  &  &  &  &  &  &  &  &  \\ 
\hspace{7pt}Mean & \cellcolor{silver!60}$\num{3.03e-04}$ & \cellcolor{silver!60}$\num{2.50e-04}$ & \cellcolor{gold!60}$\num{1.72e-04}$ & \cellcolor{gold!60}$\num{1.48e-04}$ & \cellcolor{silver!60}$\num{3.80e-03}$ & \cellcolor{silver!60}$\num{5.85e-03}$ & \cellcolor{silver!60}$\num{5.66e-03}$ & \cellcolor{gold!60}$\num{1.45e-04}$ & \cellcolor{silver!60}$\num{2.13e-03}$ & \cellcolor{silver!60}$\num{5.46e-03}$ & \cellcolor{silver!60}$\num{4.91e-03}$ & \cellcolor{silver!60}$\num{1.13e-03}$ & \cellcolor{silver!60}$\num{2.50e-03}$ \\ 
\hspace{7pt}1st Quartile & \cellcolor{silver!60}$\num{6.26e-06}$ & \cellcolor{silver!60}$\num{8.70e-06}$ & \cellcolor{silver!60}$\num{5.99e-06}$ & \cellcolor{silver!60}$\num{6.65e-06}$ & \cellcolor{silver!60}$\num{5.59e-04}$ & \cellcolor{silver!60}$\num{5.25e-04}$ & \cellcolor{silver!60}$\num{3.25e-04}$ & \cellcolor{silver!60}$\num{5.29e-06}$ & \cellcolor{silver!60}$\num{7.66e-04}$ & \cellcolor{silver!60}$\num{1.53e-03}$ & \cellcolor{silver!60}$\num{2.51e-04}$ & \cellcolor{silver!60}$\num{1.02e-04}$ & \cellcolor{silver!60}$\num{3.41e-04}$ \\ 
\hspace{7pt}2nd Quartile & \cellcolor{silver!60}$\num{5.90e-05}$ & \cellcolor{silver!60}$\num{4.74e-05}$ & \cellcolor{silver!60}$\num{4.42e-05}$ & \cellcolor{silver!60}$\num{3.16e-05}$ & \cellcolor{silver!60}$\num{1.06e-03}$ & \cellcolor{silver!60}$\num{1.03e-03}$ & \cellcolor{silver!60}$\num{7.56e-04}$ & \cellcolor{silver!60}$\num{3.81e-05}$ & \cellcolor{silver!60}$\num{1.54e-03}$ & \cellcolor{silver!60}$\num{4.27e-03}$ & \cellcolor{silver!60}$\num{7.39e-04}$ & \cellcolor{silver!60}$\num{2.79e-04}$ & \cellcolor{silver!60}$\num{8.25e-04}$ \\ 
\hspace{7pt}3rd Quartile & \cellcolor{gold!60}$\num{1.46e-04}$ & \cellcolor{gold!60}$\num{1.23e-04}$ & \cellcolor{gold!60}$\num{1.37e-04}$ & \cellcolor{gold!60}$\num{8.92e-05}$ & \cellcolor{silver!60}$\num{3.01e-03}$ & \cellcolor{silver!60}$\num{3.11e-03}$ & \cellcolor{silver!60}$\num{2.47e-03}$ & \cellcolor{gold!60}$\num{8.76e-05}$ & \cellcolor{silver!60}$\num{2.94e-03}$ & \cellcolor{silver!60}$\num{7.17e-03}$ & \cellcolor{silver!60}$\num{5.21e-03}$ & \cellcolor{silver!60}$\num{8.08e-04}$ & \cellcolor{silver!60}$\num{2.11e-03}$ \\ 
\hline 
\textbf{TreeMSR} &  &  &  &  &  &  &  &  &  &  &  &  \\ 
\hspace{7pt}Mean & \cellcolor{gold!60}$\num{2.64e-04}$ & \cellcolor{gold!60}$\num{2.21e-04}$ & \cellcolor{silver!60}$\num{2.41e-04}$ & \cellcolor{silver!60}$\num{2.06e-04}$ & \cellcolor{gold!60}$\num{3.47e-04}$ & \cellcolor{gold!60}$\num{5.91e-04}$ & \cellcolor{gold!60}$\num{3.83e-04}$ & \cellcolor{silver!60}$\num{2.09e-04}$ & \cellcolor{gold!60}$\num{3.25e-04}$ & \cellcolor{gold!60}$\num{3.67e-04}$ & \cellcolor{gold!60}$\num{3.06e-04}$ & \cellcolor{gold!60}$\num{2.18e-04}$ & \cellcolor{gold!60}$\num{3.07e-04}$ \\ 
\hspace{7pt}1st Quartile & \cellcolor{gold!60}$\num{2.48e-06}$ & \cellcolor{gold!60}$\num{1.29e-06}$ & \cellcolor{gold!60}$\num{9.50e-07}$ & \cellcolor{gold!60}$\num{1.23e-06}$ & \cellcolor{gold!60}$\num{1.77e-06}$ & \cellcolor{gold!60}$\num{4.10e-06}$ & \cellcolor{gold!60}$\num{1.08e-05}$ & \cellcolor{gold!60}$\num{1.17e-06}$ & \cellcolor{gold!60}$\num{8.75e-07}$ & \cellcolor{gold!60}$\num{7.87e-07}$ & \cellcolor{gold!60}$\num{5.09e-06}$ & \cellcolor{gold!60}$\num{7.61e-06}$ & \cellcolor{gold!60}$\num{3.18e-06}$ \\ 
\hspace{7pt}2nd Quartile & \cellcolor{gold!60}$\num{4.77e-05}$ & \cellcolor{gold!60}$\num{3.61e-05}$ & \cellcolor{gold!60}$\num{2.91e-05}$ & \cellcolor{gold!60}$\num{2.94e-05}$ & \cellcolor{gold!60}$\num{5.24e-05}$ & \cellcolor{gold!60}$\num{8.58e-05}$ & \cellcolor{gold!60}$\num{8.84e-05}$ & \cellcolor{gold!60}$\num{2.97e-05}$ & \cellcolor{gold!60}$\num{3.14e-05}$ & \cellcolor{gold!60}$\num{3.11e-05}$ & \cellcolor{gold!60}$\num{3.82e-05}$ & \cellcolor{gold!60}$\num{3.59e-05}$ & \cellcolor{gold!60}$\num{4.46e-05}$ \\ 
\hspace{7pt}3rd Quartile & \cellcolor{silver!60}$\num{3.28e-04}$ & \cellcolor{silver!60}$\num{2.31e-04}$ & \cellcolor{silver!60}$\num{2.12e-04}$ & \cellcolor{silver!60}$\num{1.72e-04}$ & \cellcolor{gold!60}$\num{3.37e-04}$ & \cellcolor{gold!60}$\num{5.80e-04}$ & \cellcolor{gold!60}$\num{3.54e-04}$ & \cellcolor{silver!60}$\num{1.85e-04}$ & \cellcolor{gold!60}$\num{2.36e-04}$ & \cellcolor{gold!60}$\num{2.85e-04}$ & \cellcolor{gold!60}$\num{3.23e-04}$ & \cellcolor{gold!60}$\num{1.97e-04}$ & \cellcolor{gold!60}$\num{2.87e-04}$ \\ 
\hline 
\textbf{OFA} &  &  &  &  &  &  &  &  &  &  &  &  \\ 
\hspace{7pt}Mean & $\num{5.85e-02}$ & $\num{5.67e-02}$ & $\num{5.27e-02}$ & $\num{5.31e-02}$ & $\num{9.00e-01}$ & $\num{7.67e-01}$ & $\num{1.80e+00}$ & \cellcolor{bronze!60}$\num{4.65e-02}$ & $\num{9.05e-02}$ & $\num{1.66e-01}$ & $\num{4.66e-01}$ & $\num{1.24e+00}$ & $\num{4.75e-01}$ \\ 
\hspace{7pt}1st Quartile & $\num{4.61e-02}$ & $\num{4.96e-02}$ & $\num{4.59e-02}$ & $\num{4.64e-02}$ & $\num{5.88e-02}$ & $\num{5.58e-02}$ & $\num{5.62e-02}$ & $\num{4.14e-02}$ & $\num{5.10e-02}$ & $\num{5.07e-02}$ & $\num{5.59e-02}$ & $\num{6.37e-02}$ & $\num{5.18e-02}$ \\ 
\hspace{7pt}2nd Quartile & $\num{5.91e-02}$ & $\num{5.69e-02}$ & $\num{5.36e-02}$ & $\num{5.18e-02}$ & $\num{1.03e-01}$ & $\num{7.27e-02}$ & $\num{8.48e-02}$ & $\num{4.56e-02}$ & $\num{5.76e-02}$ & $\num{6.21e-02}$ & $\num{8.95e-02}$ & $\num{9.57e-02}$ & $\num{6.94e-02}$ \\ 
\hspace{7pt}3rd Quartile & $\num{6.99e-02}$ & $\num{6.51e-02}$ & $\num{5.89e-02}$ & $\num{5.99e-02}$ & $\num{3.07e-01}$ & $\num{1.83e-01}$ & $\num{3.30e-01}$ & \cellcolor{bronze!60}$\num{5.15e-02}$ & $\num{6.63e-02}$ & $\num{9.03e-02}$ & $\num{1.62e-01}$ & $\num{2.13e-01}$ & $\num{1.38e-01}$ \\ 
\hline 
\textbf{WSL} &  &  &  &  &  &  &  &  &  &  &  &  \\ 
\hspace{7pt}Mean & $\num{6.33e-03}$ & $\num{1.19e-02}$ & $\num{3.37e-02}$ & $\num{6.80e-02}$ & $\num{2.10e-02}$ & $\num{6.20e-02}$ & $\num{1.48e-01}$ & $\num{2.32e-01}$ & $\num{2.30e-01}$ & $\num{1.92e-01}$ & $\num{2.70e-01}$ & $\num{4.10e-01}$ & $\num{1.40e-01}$ \\ 
\hspace{7pt}1st Quartile & $\num{3.93e-04}$ & $\num{5.20e-03}$ & $\num{1.17e-02}$ & $\num{2.62e-02}$ & $\num{7.76e-03}$ & $\num{1.53e-02}$ & $\num{5.72e-02}$ & $\num{5.58e-02}$ & $\num{6.13e-02}$ & $\num{7.05e-02}$ & $\num{1.23e-01}$ & $\num{1.41e-01}$ & $\num{4.80e-02}$ \\ 
\hspace{7pt}2nd Quartile & $\num{1.55e-03}$ & $\num{8.08e-03}$ & $\num{2.47e-02}$ & $\num{4.58e-02}$ & $\num{1.47e-02}$ & $\num{5.21e-02}$ & $\num{8.81e-02}$ & $\num{1.37e-01}$ & $\num{1.30e-01}$ & $\num{1.42e-01}$ & $\num{2.52e-01}$ & $\num{2.69e-01}$ & $\num{9.71e-02}$ \\ 
\hspace{7pt}3rd Quartile & $\num{5.09e-03}$ & $\num{1.55e-02}$ & $\num{4.56e-02}$ & $\num{8.15e-02}$ & $\num{3.15e-02}$ & $\num{9.59e-02}$ & $\num{1.79e-01}$ & $\num{2.54e-01}$ & $\num{2.46e-01}$ & $\num{1.95e-01}$ & $\num{3.79e-01}$ & $\num{5.05e-01}$ & $\num{1.69e-01}$ \\ 
\hline 
\textbf{GELS} &  &  &  &  &  &  &  &  &  &  &  &  \\ 
\hspace{7pt}Mean & $\num{2.80e-01}$ & $\num{2.11e-01}$ & $\num{1.28e-01}$ & $\num{1.04e-01}$ & $\num{1.81e-01}$ & $\num{1.65e-01}$ & $\num{1.37e-01}$ & $\num{1.10e-01}$ & $\num{1.30e-01}$ & $\num{3.18e-01}$ & $\num{2.79e-01}$ & $\num{5.74e-02}$ & $\num{1.75e-01}$ \\ 
\hspace{7pt}1st Quartile & $\num{1.53e-01}$ & $\num{1.14e-01}$ & $\num{7.55e-02}$ & $\num{6.24e-02}$ & $\num{7.95e-02}$ & $\num{7.40e-02}$ & $\num{6.76e-02}$ & $\num{5.64e-02}$ & $\num{1.01e-01}$ & $\num{2.09e-01}$ & \cellcolor{bronze!60}$\num{3.06e-02}$ & \cellcolor{bronze!60}$\num{2.85e-02}$ & $\num{8.75e-02}$ \\ 
\hspace{7pt}2nd Quartile & $\num{2.02e-01}$ & $\num{1.73e-01}$ & $\num{1.01e-01}$ & $\num{7.45e-02}$ & $\num{1.41e-01}$ & $\num{1.20e-01}$ & $\num{9.04e-02}$ & $\num{7.20e-02}$ & $\num{1.33e-01}$ & $\num{3.33e-01}$ & $\num{4.99e-02}$ & \cellcolor{bronze!60}$\num{3.97e-02}$ & $\num{1.27e-01}$ \\ 
\hspace{7pt}3rd Quartile & $\num{3.03e-01}$ & $\num{2.73e-01}$ & $\num{1.55e-01}$ & $\num{1.15e-01}$ & $\num{2.38e-01}$ & $\num{2.41e-01}$ & $\num{1.27e-01}$ & $\num{1.39e-01}$ & $\num{1.60e-01}$ & $\num{3.95e-01}$ & $\num{2.33e-01}$ & $\num{6.49e-02}$ & $\num{2.04e-01}$ \\ 
\hline 
\textbf{ARM} &  &  &  &  &  &  &  &  &  &  &  &  \\ 
\hspace{7pt}Mean & $\num{1.26e-01}$ & $\num{9.30e-02}$ & $\num{6.68e-02}$ & $\num{6.45e-02}$ & $\num{1.70e-01}$ & $\num{1.41e-01}$ & $\num{1.23e-01}$ & $\num{5.94e-02}$ & \cellcolor{bronze!60}$\num{5.06e-02}$ & \cellcolor{bronze!60}$\num{4.64e-02}$ & \cellcolor{bronze!60}$\num{4.82e-02}$ & \cellcolor{bronze!60}$\num{5.23e-02}$ & $\num{8.69e-02}$ \\ 
\hspace{7pt}1st Quartile & $\num{6.85e-02}$ & $\num{4.78e-02}$ & $\num{3.78e-02}$ & $\num{3.22e-02}$ & $\num{7.25e-02}$ & $\num{8.41e-02}$ & $\num{7.84e-02}$ & \cellcolor{bronze!60}$\num{3.18e-02}$ & $\num{3.18e-02}$ & $\num{3.12e-02}$ & $\num{3.39e-02}$ & $\num{4.02e-02}$ & $\num{4.92e-02}$ \\ 
\hspace{7pt}2nd Quartile & $\num{9.96e-02}$ & $\num{6.47e-02}$ & $\num{4.48e-02}$ & $\num{4.36e-02}$ & $\num{1.16e-01}$ & $\num{1.08e-01}$ & $\num{9.55e-02}$ & \cellcolor{bronze!60}$\num{4.09e-02}$ & \cellcolor{bronze!60}$\num{4.43e-02}$ & \cellcolor{bronze!60}$\num{4.17e-02}$ & \cellcolor{bronze!60}$\num{4.29e-02}$ & $\num{4.67e-02}$ & $\num{6.57e-02}$ \\ 
\hspace{7pt}3rd Quartile & $\num{1.40e-01}$ & $\num{9.12e-02}$ & $\num{7.38e-02}$ & $\num{8.65e-02}$ & $\num{1.72e-01}$ & $\num{1.63e-01}$ & $\num{1.42e-01}$ & $\num{6.22e-02}$ & \cellcolor{bronze!60}$\num{6.25e-02}$ & \cellcolor{bronze!60}$\num{5.81e-02}$ & \cellcolor{bronze!60}$\num{5.49e-02}$ & \cellcolor{bronze!60}$\num{6.38e-02}$ & $\num{9.75e-02}$ \\ 
\hline 
\textbf{WeightedSHAP} &  &  &  &  &  &  &  &  &  &  &  &  \\ 
\hspace{7pt}Mean & \cellcolor{bronze!60}$\num{2.19e-03}$ & \cellcolor{bronze!60}$\num{7.74e-03}$ & \cellcolor{bronze!60}$\num{1.81e-02}$ & \cellcolor{bronze!60}$\num{2.83e-02}$ & \cellcolor{bronze!60}$\num{8.50e-03}$ & \cellcolor{bronze!60}$\num{3.02e-02}$ & \cellcolor{bronze!60}$\num{8.27e-02}$ & $\num{8.75e-02}$ & $\num{9.45e-02}$ & $\num{1.06e-01}$ & $\num{1.03e-01}$ & $\num{2.25e-01}$ & \cellcolor{bronze!60}$\num{6.62e-02}$ \\ 
\hspace{7pt}1st Quartile & \cellcolor{bronze!60}$\num{2.40e-04}$ & \cellcolor{bronze!60}$\num{3.04e-03}$ & \cellcolor{bronze!60}$\num{6.62e-03}$ & \cellcolor{bronze!60}$\num{8.16e-03}$ & \cellcolor{bronze!60}$\num{4.10e-03}$ & \cellcolor{bronze!60}$\num{8.39e-03}$ & \cellcolor{bronze!60}$\num{2.47e-02}$ & $\num{3.21e-02}$ & \cellcolor{bronze!60}$\num{2.53e-02}$ & \cellcolor{bronze!60}$\num{2.53e-02}$ & $\num{3.61e-02}$ & $\num{6.98e-02}$ & \cellcolor{bronze!60}$\num{2.03e-02}$ \\ 
\hspace{7pt}2nd Quartile & \cellcolor{bronze!60}$\num{8.74e-04}$ & \cellcolor{bronze!60}$\num{4.91e-03}$ & \cellcolor{bronze!60}$\num{1.11e-02}$ & \cellcolor{bronze!60}$\num{2.12e-02}$ & \cellcolor{bronze!60}$\num{7.68e-03}$ & \cellcolor{bronze!60}$\num{2.51e-02}$ & \cellcolor{bronze!60}$\num{4.33e-02}$ & $\num{5.43e-02}$ & $\num{7.04e-02}$ & $\num{6.18e-02}$ & $\num{6.52e-02}$ & $\num{1.31e-01}$ & \cellcolor{bronze!60}$\num{4.14e-02}$ \\ 
\hspace{7pt}3rd Quartile & \cellcolor{bronze!60}$\num{2.49e-03}$ & \cellcolor{bronze!60}$\num{9.36e-03}$ & \cellcolor{bronze!60}$\num{1.78e-02}$ & \cellcolor{bronze!60}$\num{4.14e-02}$ & \cellcolor{bronze!60}$\num{1.27e-02}$ & \cellcolor{bronze!60}$\num{3.98e-02}$ & \cellcolor{bronze!60}$\num{1.07e-01}$ & $\num{1.10e-01}$ & $\num{1.07e-01}$ & $\num{1.56e-01}$ & $\num{1.26e-01}$ & $\num{2.81e-01}$ & \cellcolor{bronze!60}$\num{8.43e-02}$ \\ 
\hline
\end{tabular}}
    \label{tab:probabilistic_big_n}
\end{table}

\textbf{Ground Truth Probabilistic Values}
For small datasets with $n < 30$, we use a neural network model and compute the true probabilistic values through enumeration.
For larger datasets with $n \geq 30$ where exact enumeration is infeasible, we use a random forest model and compute the true probabilistic values using the algorithm described in Appendix \ref{appendix:tree_prob}.
(Please see Appendix \ref{appendix:datasets} for a summary of the datasets in our experiments.)
We emphasize that our method for computing the probabilistic values of trees enables the first experiments on medium to large datasets where we can compare the estimates to the \textit{ground truth} probabilistic values.
Such experiments were previously done for Shapley and Banzhaf values \cite{musco2024provably,liu2025kernelbanzhaffastrobust}, but not for other probabilistic values.

\textbf{Baselines}
We compare Linear MSR and Tree MSR to a wide variety of probabilistic value estimators from prior work.
For the popular task of estimating Shapley values, we focus on the most effective estimators for general value functions i.e., Permutation SHAP \cite{castro2009polynomial}, Kernel SHAP \cite{lundberg2017unified,covert2020improving}, and Leverage SHAP \cite{musco2024provably}.
We use the optimized implementations of Permutation SHAP and Kernel SHAP in the ubiquitous SHAP library for parity \cite{lundberg2017unified}.
For estimating probabilistic values, there has been substantial recent interest in designing estimators \cite{kwon2022beta,kwon2022weightedshap,lin2022measuring,wang2023databanzhaf,kolpaczki2024approximating,li2024faster,li2024one}.
These estimators generally use the standard Monte Carlo approach, apply the Maximum Sample Reuse idea, or extend linear regression-based methods.
We provide a description of each in Appendix \ref{appendix:baselines}.

\textbf{Error and Uncertainty}
We measure the error between the true probabilistic values $\bm{\phi}$ and the estimated probabilistic values $\tilde{\bm{\phi}}$ with the $\ell_2$-norm error $\|\bm{\phi}-\tilde{\bm{\phi}}\|_2^2/\|\bm{\phi}\|_2^2$.
All of our tables and figures report summary statistics over at least 10 runs.
In the tables, we report the mean, first quartile, median, and third quartiles of the error. (We do not report $+/-$ standard deviation because these are often negative for the small errors in our experiments.)
In the figures, we report the mean error.

\textbf{Implementation Details}
%All of our code is available in the supplementary material, and will be publicly released on GitHub at the time of publication.
We use scikit-learn \cite{scikit-learn} and XGBoost \cite{chen2016xgboost} for training our models.
For the implementations of Permutation SHAP and Kernel SHAP, we use the SHAP library \cite{lundberg2017unified}.
Please see Appendix \ref{appendix:datasets} for details on how we accessed each dataset.
All of our experiments are run on a machine with an Apple M2 chip and 8GB RAM.

We first describe our experiments on the popular task of estimating Shapley values.
Figure \ref{fig:shapley_sample_size} shows estimator performance by sample size, and Table \ref{tab:shapley} highlights the corresponding uncertainty statistics when each estimator is run with $m=40n$ samples.
We find that Linear MSR generally improves the prior state-of-the-art Leverage SHAP by making its estimates unbiased, but the gain is marginal.
In contrast, the performance Tree MSR depends on how well the tree-based approximation $f$ fits the underlying value function $v$.
When the number of samples is smaller, Tree MSR is comparable to prior Shapley value estimators; however, as the number of samples grows, the tree-based model becomes more accurate and Tree MSR often gives the best performance, sometimes by orders of magnitude.
As can be seen in Table \ref{tab:shapley}, \textbf{Tree MSR can give average error that is $2.6\times$ lower than the prior state-of-the-art Leverage SHAP}, when $m=40n$.
For larger sample sizes, Tree MSR outperforms Leverage SHAP by an even wider margin, as shown in Figure \ref{fig:shapley_sample_size}.

Beyond estimating Shapley values, we run experiments on estimating the more general beta Shapley values and weighted Banzhaf values (see Appendix \ref{appendix:more_probs} for definitions).
Figure \ref{fig:prob_complexity_small_n} shows estimator performance by sample size for small datasets (where we can feasibly compute the true probabilistic values of neural networks), and Table \ref{tab:probabilistic_small_n} highlights the corresponding uncertainty statistics when each estimator is run with $m=40n$ samples.
We present the analogous results for smaller datasets (where we can exactly compute the probabilistic values of neural networks) in Figure \ref{fig:prob_complexity_small_n} and Table \ref{tab:probabilistic_small_n} in Appendix \ref{appendix:net_experiments}.
We find that Linear MSR generally plateaus as the number of samples increases.
In contrast, Tree MSR gives the best performance across the board, with the gap to the next best estimator widening with the number of samples.
We confirm this finding in Figure \ref{fig:fit_sample_complexity}.
As can be seen in Table \ref{tab:probabilistic_small_n}, \textbf{Tree MSR can give average error that is $215\times$ lower than the best probabilistic value estimator from prior work}.

We provide additional experiments on the effect of noisy access to the value function in Appendix \ref{appendix:experiments_noise}. Figures \ref{fig:prob_complexity_small_n_noise} and \ref{fig:prob_complexity_big_n_noise} suggest that Tree MSR is particularly resilient to noise.

\begin{figure}[tb!]
    \centering
    \includegraphics[width=\linewidth]{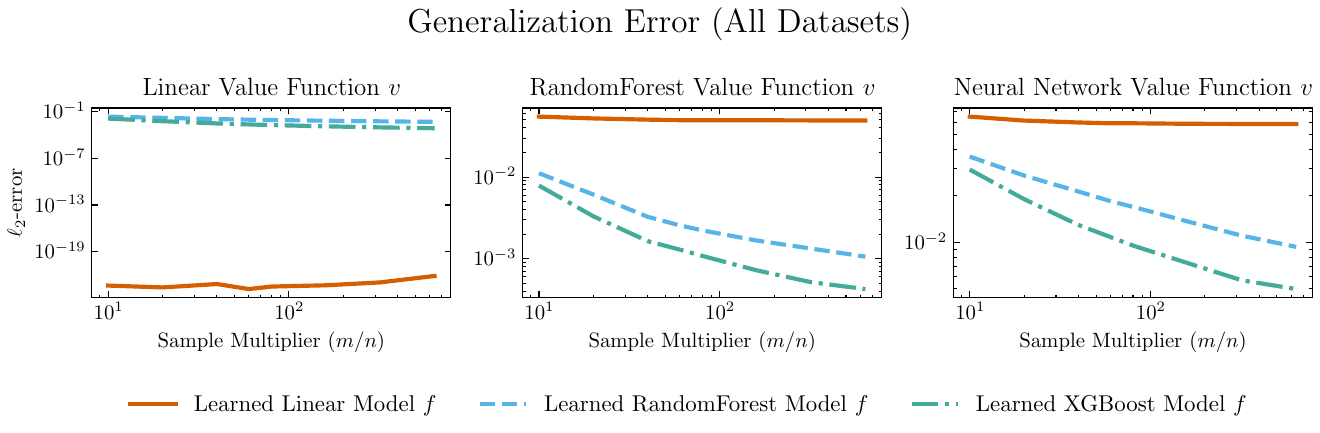}
    \caption{Generalization error between value function $v$ and learned model $f$ by sample size, averaged over all datasets. When the base model is linear, the learned linear model quickly fits it to machine precision. When the base model is a random forest or a neural network, the error of the linear model plateaus while the random forest and XGBoost learned models continue to improve. This phenomenon is reflected in Figures \ref{fig:prob_complexity_big_n} and \ref{fig:prob_complexity_small_n}; the performance of Tree MSR continues to improve with the number of samples while Linear MSR plateaus.}
    \label{fig:fit_sample_complexity}
\end{figure}

\section*{Limitations and Broader Impacts}

The performance of our methods depends on the underlying fit of the learned model.
When the dataset is structured or the number of samples is large, the learned model is accurate and the estimators are, too.
However, for datasets with less structure or in sample-constrained settings, the performance of our estimators can worsen.

The primary application of our work is in explainable AI, where we seek to understand how features and data points contribute to the performance of machine learning models.
We expect the broader impact of our work to be better model explanations, and we do not see substantial negative risks as a result of our research.

\begin{ack}
Witter was supported by NSF Graduate Research Fellowship Grant No. DGE-2234660. Liu was partially supported by NSF Awards IIS-2106888 and the DARPA ASKEM and ARPA-H BDF programs. Musco was partially supported by NSF Award CCF-2045590.
\end{ack}

\bibliographystyle{alpha}
\bibliography{references}

\clearpage

\section*{NeurIPS Paper Checklist}

\begin{enumerate}

\item {\bf Claims}
    \item[] Question: Do the main claims made in the abstract and introduction accurately reflect the paper's contributions and scope?
    \item[] Answer: \answerYes{} % Replace by \answerYes{}, \answerNo{}, or \answerNA{}.
    \item[] Justification: {Please see Sections \ref{sec:regMSR} and \ref{sec:experiments}, as well as Appendix \ref{appendix:proof}.}
    \item[] Guidelines:
    \begin{itemize}
        \item The answer NA means that the abstract and introduction do not include the claims made in the paper.
        \item The abstract and/or introduction should clearly state the claims made, including the contributions made in the paper and important assumptions and limitations. A No or NA answer to this question will not be perceived well by the reviewers. 
        \item The claims made should match theoretical and experimental results, and reflect how much the results can be expected to generalize to other settings. 
        \item It is fine to include aspirational goals as motivation as long as it is clear that these goals are not attained by the paper. 
    \end{itemize}

\item {\bf Limitations}
    \item[] Question: Does the paper discuss the limitations of the work performed by the authors?
    \item[] Answer: \answerYes{} % Replace by \answerYes{}, \answerNo{}, or \answerNA{}.
    \item[] Justification: Please see the Limitations and Broader Impacts section immediately before the references.
    \item[] Guidelines:
    \begin{itemize}
        \item The answer NA means that the paper has no limitation while the answer No means that the paper has limitations, but those are not discussed in the paper. 
        \item The authors are encouraged to create a separate "Limitations" section in their paper.
        \item The paper should point out any strong assumptions and how robust the results are to violations of these assumptions (e.g., independence assumptions, noiseless settings, model well-specification, asymptotic approximations only holding locally). The authors should reflect on how these assumptions might be violated in practice and what the implications would be.
        \item The authors should reflect on the scope of the claims made, e.g., if the approach was only tested on a few datasets or with a few runs. In general, empirical results often depend on implicit assumptions, which should be articulated.
        \item The authors should reflect on the factors that influence the performance of the approach. For example, a facial recognition algorithm may perform poorly when image resolution is low or images are taken in low lighting. Or a speech-to-text system might not be used reliably to provide closed captions for online lectures because it fails to handle technical jargon.
        \item The authors should discuss the computational efficiency of the proposed algorithms and how they scale with dataset size.
        \item If applicable, the authors should discuss possible limitations of their approach to address problems of privacy and fairness.
        \item While the authors might fear that complete honesty about limitations might be used by reviewers as grounds for rejection, a worse outcome might be that reviewers discover limitations that aren't acknowledged in the paper. The authors should use their best judgment and recognize that individual actions in favor of transparency play an important role in developing norms that preserve the integrity of the community. Reviewers will be specifically instructed to not penalize honesty concerning limitations.
    \end{itemize}

\item {\bf Theory assumptions and proofs}
    \item[] Question: For each theoretical result, does the paper provide the full set of assumptions and a complete (and correct) proof?
    \item[] Answer: \answerYes{} % Replace by \answerYes{}, \answerNo{}, or \answerNA{}.
    \item[] Justification: Please see Appendix \ref{appendix:proof}.
    \item[] Guidelines:
    \begin{itemize}
        \item The answer NA means that the paper does not include theoretical results. 
        \item All the theorems, formulas, and proofs in the paper should be numbered and cross-referenced.
        \item All assumptions should be clearly stated or referenced in the statement of any theorems.
        \item The proofs can either appear in the main paper or the supplemental material, but if they appear in the supplemental material, the authors are encouraged to provide a short proof sketch to provide intuition. 
        \item Inversely, any informal proof provided in the core of the paper should be complemented by formal proofs provided in appendix or supplemental material.
        \item Theorems and Lemmas that the proof relies upon should be properly referenced. 
    \end{itemize}

    \item {\bf Experimental result reproducibility}
    \item[] Question: Does the paper fully disclose all the information needed to reproduce the main experimental results of the paper to the extent that it affects the main claims and/or conclusions of the paper (regardless of whether the code and data are provided or not)?
    \item[] Answer: \answerYes{} % Replace by \answerYes{}, \answerNo{}, or \answerNA{}.
    \item[] Justification: Please see Section \ref{sec:experiments}.
    \item[] Guidelines:
    \begin{itemize}
        \item The answer NA means that the paper does not include experiments.
        \item If the paper includes experiments, a No answer to this question will not be perceived well by the reviewers: Making the paper reproducible is important, regardless of whether the code and data are provided or not.
        \item If the contribution is a dataset and/or model, the authors should describe the steps taken to make their results reproducible or verifiable. 
        \item Depending on the contribution, reproducibility can be accomplished in various ways. For example, if the contribution is a novel architecture, describing the architecture fully might suffice, or if the contribution is a specific model and empirical evaluation, it may be necessary to either make it possible for others to replicate the model with the same dataset, or provide access to the model. In general. releasing code and data is often one good way to accomplish this, but reproducibility can also be provided via detailed instructions for how to replicate the results, access to a hosted model (e.g., in the case of a large language model), releasing of a model checkpoint, or other means that are appropriate to the research performed.
        \item While NeurIPS does not require releasing code, the conference does require all submissions to provide some reasonable avenue for reproducibility, which may depend on the nature of the contribution. For example
        \begin{enumerate}
            \item If the contribution is primarily a new algorithm, the paper should make it clear how to reproduce that algorithm.
            \item If the contribution is primarily a new model architecture, the paper should describe the architecture clearly and fully.
            \item If the contribution is a new model (e.g., a large language model), then there should either be a way to access this model for reproducing the results or a way to reproduce the model (e.g., with an open-source dataset or instructions for how to construct the dataset).
            \item We recognize that reproducibility may be tricky in some cases, in which case authors are welcome to describe the particular way they provide for reproducibility. In the case of closed-source models, it may be that access to the model is limited in some way (e.g., to registered users), but it should be possible for other researchers to have some path to reproducing or verifying the results.
        \end{enumerate}
    \end{itemize}

\item {\bf Open access to data and code}
    \item[] Question: Does the paper provide open access to the data and code, with sufficient instructions to faithfully reproduce the main experimental results, as described in supplemental material?
    \item[] Answer: \answerYes{} % Replace by \answerYes{}, \answerNo{}, or \answerNA{}.
    \item[] Justification: The code is available at \url{https://github.com/rtealwitter/regressionMSR}.
    \item[] Guidelines:
    \begin{itemize}
        \item The answer NA means that paper does not include experiments requiring code.
        \item Please see the NeurIPS code and data submission guidelines (\url{https://nips.cc/public/guides/CodeSubmissionPolicy}) for more details.
        \item While we encourage the release of code and data, we understand that this might not be possible, so “No” is an acceptable answer. Papers cannot be rejected simply for not including code, unless this is central to the contribution (e.g., for a new open-source benchmark).
        \item The instructions should contain the exact command and environment needed to run to reproduce the results. See the NeurIPS code and data submission guidelines (\url{https://nips.cc/public/guides/CodeSubmissionPolicy}) for more details.
        \item The authors should provide instructions on data access and preparation, including how to access the raw data, preprocessed data, intermediate data, and generated data, etc.
        \item The authors should provide scripts to reproduce all experimental results for the new proposed method and baselines. If only a subset of experiments are reproducible, they should state which ones are omitted from the script and why.
        \item At submission time, to preserve anonymity, the authors should release anonymized versions (if applicable).
        \item Providing as much information as possible in supplemental material (appended to the paper) is recommended, but including URLs to data and code is permitted.
    \end{itemize}

\item {\bf Experimental setting/details}
    \item[] Question: Does the paper specify all the training and test details (e.g., data splits, hyperparameters, how they were chosen, type of optimizer, etc.) necessary to understand the results?
    \item[] Answer: \answerYes{} % Replace by \answerYes{}, \answerNo{}, or \answerNA{}.
    \item[] Justification: Please see Section \ref{sec:experiments}.
    \item[] Guidelines:
    \begin{itemize}
        \item The answer NA means that the paper does not include experiments.
        \item The experimental setting should be presented in the core of the paper to a level of detail that is necessary to appreciate the results and make sense of them.
        \item The full details can be provided either with the code, in appendix, or as supplemental material.
    \end{itemize}

\item {\bf Experiment statistical significance}
    \item[] Question: Does the paper report error bars suitably and correctly defined or other appropriate information about the statistical significance of the experiments?
    \item[] Answer: \answerYes{} % Replace by \answerYes{}, \answerNo{}, or \answerNA{}.
    \item[] Justification: Please see Section \ref{sec:experiments}.
    \item[] Guidelines:
    \begin{itemize}
        \item The answer NA means that the paper does not include experiments.
        \item The authors should answer "Yes" if the results are accompanied by error bars, confidence intervals, or statistical significance tests, at least for the experiments that support the main claims of the paper.
        \item The factors of variability that the error bars are capturing should be clearly stated (for example, train/test split, initialization, random drawing of some parameter, or overall run with given experimental conditions).
        \item The method for calculating the error bars should be explained (closed form formula, call to a library function, bootstrap, etc.)
        \item The assumptions made should be given (e.g., Normally distributed errors).
        \item It should be clear whether the error bar is the standard deviation or the standard error of the mean.
        \item It is OK to report 1-sigma error bars, but one should state it. The authors should preferably report a 2-sigma error bar than state that they have a 96\% CI, if the hypothesis of Normality of errors is not verified.
        \item For asymmetric distributions, the authors should be careful not to show in tables or figures symmetric error bars that would yield results that are out of range (e.g. negative error rates).
        \item If error bars are reported in tables or plots, The authors should explain in the text how they were calculated and reference the corresponding figures or tables in the text.
    \end{itemize}

\item {\bf Experiments compute resources}
    \item[] Question: For each experiment, does the paper provide sufficient information on the computer resources (type of compute workers, memory, time of execution) needed to reproduce the experiments?
    \item[] Answer: \answerYes{} % Replace by \answerYes{}, \answerNo{}, or \answerNA{}.
    \item[] Justification: Please see Section \ref{sec:experiments}.
    \item[] Guidelines:
    \begin{itemize}
        \item The answer NA means that the paper does not include experiments.
        \item The paper should indicate the type of compute workers CPU or GPU, internal cluster, or cloud provider, including relevant memory and storage.
        \item The paper should provide the amount of compute required for each of the individual experimental runs as well as estimate the total compute. 
        \item The paper should disclose whether the full research project required more compute than the experiments reported in the paper (e.g., preliminary or failed experiments that didn't make it into the paper). 
    \end{itemize}
    
\item {\bf Code of ethics}
    \item[] Question: Does the research conducted in the paper conform, in every respect, with the NeurIPS Code of Ethics \url{https://neurips.cc/public/EthicsGuidelines}?
    \item[] Answer: \answerYes{} % Replace by \answerYes{}, \answerNo{}, or \answerNA{}.
    \item[] Justification: The research conforms to the NeurIPS Code of Ethics.
    \item[] Guidelines:
    \begin{itemize}
        \item The answer NA means that the authors have not reviewed the NeurIPS Code of Ethics.
        \item If the authors answer No, they should explain the special circumstances that require a deviation from the Code of Ethics.
        \item The authors should make sure to preserve anonymity (e.g., if there is a special consideration due to laws or regulations in their jurisdiction).
    \end{itemize}

\item {\bf Broader impacts}
    \item[] Question: Does the paper discuss both potential positive societal impacts and negative societal impacts of the work performed?
    \item[] Answer: \answerYes{} % Replace by \answerYes{}, \answerNo{}, or \answerNA{}.
    \item[] Justification: {Please see the Limitations and Broader Impacts section before the references.}
    \item[] Guidelines:
    \begin{itemize}
        \item The answer NA means that there is no societal impact of the work performed.
        \item If the authors answer NA or No, they should explain why their work has no societal impact or why the paper does not address societal impact.
        \item Examples of negative societal impacts include potential malicious or unintended uses (e.g., disinformation, generating fake profiles, surveillance), fairness considerations (e.g., deployment of technologies that could make decisions that unfairly impact specific groups), privacy considerations, and security considerations.
        \item The conference expects that many papers will be foundational research and not tied to particular applications, let alone deployments. However, if there is a direct path to any negative applications, the authors should point it out. For example, it is legitimate to point out that an improvement in the quality of generative models could be used to generate deepfakes for disinformation. On the other hand, it is not needed to point out that a generic algorithm for optimizing neural networks could enable people to train models that generate Deepfakes faster.
        \item The authors should consider possible harms that could arise when the technology is being used as intended and functioning correctly, harms that could arise when the technology is being used as intended but gives incorrect results, and harms following from (intentional or unintentional) misuse of the technology.
        \item If there are negative societal impacts, the authors could also discuss possible mitigation strategies (e.g., gated release of models, providing defenses in addition to attacks, mechanisms for monitoring misuse, mechanisms to monitor how a system learns from feedback over time, improving the efficiency and accessibility of ML).
    \end{itemize}
    
\item {\bf Safeguards}
    \item[] Question: Does the paper describe safeguards that have been put in place for responsible release of data or models that have a high risk for misuse (e.g., pretrained language models, image generators, or scraped datasets)?
    \item[] Answer: \answerNA{} % Replace by \answerYes{}, \answerNo{}, or \answerNA{}.
    \item[] Justification: We do not have datasets or models that have a high risk of misuse.
    \item[] Guidelines:
    \begin{itemize}
        \item The answer NA means that the paper poses no such risks.
        \item Released models that have a high risk for misuse or dual-use should be released with necessary safeguards to allow for controlled use of the model, for example by requiring that users adhere to usage guidelines or restrictions to access the model or implementing safety filters. 
        \item Datasets that have been scraped from the Internet could pose safety risks. The authors should describe how they avoided releasing unsafe images.
        \item We recognize that providing effective safeguards is challenging, and many papers do not require this, but we encourage authors to take this into account and make a best faith effort.
    \end{itemize}

\item {\bf Licenses for existing assets}
    \item[] Question: Are the creators or original owners of assets (e.g., code, data, models), used in the paper, properly credited and are the license and terms of use explicitly mentioned and properly respected?
    \item[] Answer: \answerYes{} % Replace by \answerYes{}, \answerNo{}, or \answerNA{}.
    \item[] Justification: Please see Appendix \ref{appendix:datasets}.
    \item[] Guidelines:
    \begin{itemize}
        \item The answer NA means that the paper does not use existing assets.
        \item The authors should cite the original paper that produced the code package or dataset.
        \item The authors should state which version of the asset is used and, if possible, include a URL.
        \item The name of the license (e.g., CC-BY 4.0) should be included for each asset.
        \item For scraped data from a particular source (e.g., website), the copyright and terms of service of that source should be provided.
        \item If assets are released, the license, copyright information, and terms of use in the package should be provided. For popular datasets, \url{paperswithcode.com/datasets} has curated licenses for some datasets. Their licensing guide can help determine the license of a dataset.
        \item For existing datasets that are re-packaged, both the original license and the license of the derived asset (if it has changed) should be provided.
        \item If this information is not available online, the authors are encouraged to reach out to the asset's creators.
    \end{itemize}

\item {\bf New assets}
    \item[] Question: Are new assets introduced in the paper well documented and is the documentation provided alongside the assets?
    \item[] Answer: \answerYes{} % Replace by \answerYes{}, \answerNo{}, or \answerNA{}.
    \item[] Justification: Please see the supplementary material.
    \item[] Guidelines:
    \begin{itemize}
        \item The answer NA means that the paper does not release new assets.
        \item Researchers should communicate the details of the dataset/code/model as part of their submissions via structured templates. This includes details about training, license, limitations, etc. 
        \item The paper should discuss whether and how consent was obtained from people whose asset is used.
        \item At submission time, remember to anonymize your assets (if applicable). You can either create an anonymized URL or include an anonymized zip file.
    \end{itemize}

\item {\bf Crowdsourcing and research with human subjects}
    \item[] Question: For crowdsourcing experiments and research with human subjects, does the paper include the full text of instructions given to participants and screenshots, if applicable, as well as details about compensation (if any)? 
    \item[] Answer: \answerNA{} % Replace by \answerYes{}, \answerNo{}, or \answerNA{}.
    \item[] Justification: We did not conduct research with human subjects.
    \item[] Guidelines:
    \begin{itemize}
        \item The answer NA means that the paper does not involve crowdsourcing nor research with human subjects.
        \item Including this information in the supplemental material is fine, but if the main contribution of the paper involves human subjects, then as much detail as possible should be included in the main paper. 
        \item According to the NeurIPS Code of Ethics, workers involved in data collection, curation, or other labor should be paid at least the minimum wage in the country of the data collector. 
    \end{itemize}

\item {\bf Institutional review board (IRB) approvals or equivalent for research with human subjects}
    \item[] Question: Does the paper describe potential risks incurred by study participants, whether such risks were disclosed to the subjects, and whether Institutional Review Board (IRB) approvals (or an equivalent approval/review based on the requirements of your country or institution) were obtained?
    \item[] Answer: \answerNA{} % Replace by \answerYes{}, \answerNo{}, or \answerNA{}.
    \item[] Justification: We did not conduct research with human subjects.
    \item[] Guidelines:
    \begin{itemize}
        \item The answer NA means that the paper does not involve crowdsourcing nor research with human subjects.
        \item Depending on the country in which research is conducted, IRB approval (or equivalent) may be required for any human subjects research. If you obtained IRB approval, you should clearly state this in the paper. 
        \item We recognize that the procedures for this may vary significantly between institutions and locations, and we expect authors to adhere to the NeurIPS Code of Ethics and the guidelines for their institution. 
        \item For initial submissions, do not include any information that would break anonymity (if applicable), such as the institution conducting the review.
    \end{itemize}

\item {\bf Declaration of LLM usage}
    \item[] Question: Does the paper describe the usage of LLMs if it is an important, original, or non-standard component of the core methods in this research? Note that if the LLM is used only for writing, editing, or formatting purposes and does not impact the core methodology, scientific rigorousness, or originality of the research, declaration is not required.
    %this research? 
    \item[] Answer: \answerNA{} % Replace by \answerYes{}, \answerNo{}, or \answerNA{}.
    \item[] Justification: The core method development in this work does not involve LLMs.
    \item[] Guidelines:
    \begin{itemize}
        \item The answer NA means that the core method development in this research does not involve LLMs as any important, original, or non-standard components.
        \item Please refer to our LLM policy (\url{https://neurips.cc/Conferences/2025/LLM}) for what should or should not be described.
    \end{itemize}

\end{enumerate}

\clearpage

\appendix

\section{Proof of Error Bound}\label{appendix:proof}

\errorbound*

\begin{proof}[Proof of Theorem \ref{thm:error_bound}]
We will analyze the variance when the samples are drawn with replacement.
By Theorem 4 in \cite{hoeffding1963probability}, the variance can only decrease when samples are drawn without replacement.

Consider $\ell \in [k]$.
For simplicity, suppose that $|\mathcal{S}^{(\ell)}| = m/k$.
We will first show that each estimated probabilistic value $\tilde{\phi}_i^{(\ell)}$ is unbiased:
\begin{align*}
    \E[\tilde{\phi}_i^{(\ell)}]
    &=\phi_i(f^{(\ell)}) 
    + \frac{k}{m} \E \left[\sum_{S' \in \mathcal{S}^{(\ell)}} 
    \sum_{S \subseteq [n]}
    \frac{\mathbbm{1}[S = S']}{\mathcal{D}(S)}
    [v(S) - f^{(\ell)}(S)] \left(p_{|S|-1} \mathbbm{1}[i \in S] - p_{|S|} \mathbbm{1}[i \notin S]\right)\right]
    \nonumber
    \\&= \phi_i(f^{(\ell)}) + \sum_{S \subseteq [n]}
    [v(S) - f^{(\ell)}(S)] (p_{|S|-1} \mathbbm{1}[i \in S] - p_{|S|} \mathbbm{1}[i \notin S])
    \\&=\phi_i(f^{(\ell)})
    + \sum_{S \subseteq [n] \setminus \{i\}}
    [v(S \cup \{i\}) - v(S)] p_{|S|}
    - \sum_{S \subseteq [n] \setminus \{i\}}
    [f^{(\ell)}(S \cup \{i\}) - f^{(\ell)}(S)] p_{|S|}
    \\&=  \sum_{S \subseteq [n] \setminus \{i\}}
    [v(S \cup \{i\}) - v(S)] p_{|S|}
    = \phi_i.
\end{align*}
Since $\tilde{\phi}_i^{(\ell)}$ is unbiased, the final estimate $\E[\frac1{k}\sum_{\ell=1}^k \tilde{\phi}_i^{(\ell)}]$ is also unbiased by the linearity of expectation.
Next, we will analyze the variance of each estimate:
\begin{align}
    \Var[\tilde{\phi}_i^{(\ell)}]
    &= \frac{k^2}{m^2} \Var \left[\sum_{S' \in \mathcal{S}^{(\ell)}} 
    \sum_{S \subseteq [n]}
    \mathbbm{1}[S = S']
    [v(S) - f^{(\ell)}(S)] \frac{p_{|S|-1} \mathbbm{1}[i \in S] - p_{|S|} \mathbbm{1}[i \notin S]}{\mathcal{D}(S)}\right]
    \nonumber
    \\&\leq \frac{k}{m} \sum_{S \subseteq [n]}
    \mathcal{D}(S)
    [v(S) - f^{(\ell)}(S)]^2 \left(\frac{p_{|S|-1} \mathbbm{1}[i \in S] - p_{|S|} \mathbbm{1}[i \notin S]}{\mathcal{D}(S)}\right)^2
    \nonumber
    \\&= \frac{k}{m} \sum_{S \subseteq [n]}
    [v(S) - f^{(\ell)}(S)]^2 \frac{p_{|S|-1}^2 \mathbbm{1}[i \in S] + p_{|S|}^2 \mathbbm{1}[i \notin S]}{\mathcal{D}(S)}.
    \label{eq:estimate_var}
\end{align}
Let $\bm{\phi} \in \mathbb{R}^n$ and $\tilde{\bm{\phi}}^{(\ell)} \in \mathbb{R}^n$ be vectors containing the true and estimated probabilistic values, respectively.
We will analyze the random variable $\| \bm{\phi} - \tilde{\bm{\phi}}^{(\ell)} \|^2$.
By linearity of expectation, we have
\begin{align*}
    \E[\| \bm{\phi} - \tilde{\bm{\phi}}^{(\ell)} \|^2]
    = \E \left[\sum_{i=1}^n (\phi_i - \tilde{\phi}_i^{(\ell)})^2\right]
    = \sum_{i=1}^n \E \left[(\phi_i - \tilde{\phi}_i^{(\ell)})^2\right]
    = \sum_{i=1}^n \Var[\phi_i - \tilde{\phi}_i^{(\ell)}]
    = \sum_{i=1}^n \Var[\tilde{\phi}_i^{(\ell)}]
\end{align*}
where the penultimate equality follows because $\E[\tilde{\phi}^{(\ell)}_i]=\phi_i$ and the final equality follows because $\phi_i$ is a constant with respect to the randomness of the estimator.
Plugging in Equation \eqref{eq:estimate_var}, we get
\begin{align}
    \E[\| \bm{\phi} - \tilde{\bm{\phi}}^{(\ell)} \|^2] 
    &\leq \sum_{i=1}^n \frac{k}{m} \sum_{S \subseteq [n]}
    [v(S) - f^{(\ell)}(S)]^2 \frac{p_{|S|-1}^2 \mathbbm{1}[i \in S] + p_{|S|}^2 \mathbbm{1}[i \notin S]}{\mathcal{D}(S)}
    \nonumber
    \\&= \frac{k}{m} \sum_{S \subseteq [n]}
    [v(S) - f^{(\ell)}(S)]^2 \frac{p_{|S|-1}^2 |S| + p_{|S|}^2 (n-|S|)}{\mathcal{D}(S)}
    \nonumber
    \\&\leq \frac{k}{m} \sum_{S \subseteq [n]}
    [v(S) - f_{\max}(S)]^2 \frac{p_{|S|-1}^2 |S| + p_{|S|}^2 (n-|S|)}{\mathcal{D}(S)}
    \label{eq:rv_var}
\end{align}
where 
\begin{align*}
    f_{\max} := f^{(\ell^*)}, \quad \text{where } \ell^* = \argmax_{\ell \in [k]} \sum_{S \subseteq [n]}  [v(S) - f^{(\ell)}(S)]^2
    \frac{p_{|S|-1}^2 |S| + p_{|S|}^2 (n-|S|)}{\mathcal{D}(S)}.
\end{align*}
We now apply Markov's inequality to $\| \bm{\phi} - \tilde{\bm{\phi}}^{(\ell)} \|^2$ for each $\ell$:
\begin{align*}
    \Pr\left(\| \bm{\phi} - \tilde{\bm{\phi}}^{(\ell)} \|^2 \geq \frac1{\delta'} \E[\| \bm{\phi} - \tilde{\bm{\phi}}^{(\ell)} \|^2]\right)
    \leq \delta'.
\end{align*}
We are interested in the final estimate $\tilde{\bm{\phi}} = \frac1{k}\sum_{\ell=1}^k \tilde{\bm{\phi}}^{(\ell)}$.
By the Union Bound, we have, with probability at most $k \delta'$, 
\begin{align*}
    \sum_{\ell=1}^k \| \bm{\phi} - \tilde{\bm{\phi}}^{(\ell)} \|
    \geq  \sum_{\ell=1}^k \sqrt{\frac1{\delta'} \E[\| \bm{\phi} - \tilde{\bm{\phi}}^{(\ell)} \|^2]}.
\end{align*}

By the triangle inequality, setting $\delta'=\delta/k$, and taking the complement, we have, with probability $1-\delta$,
\begin{align*}
    k \| \bm{\phi} - \tilde{\bm{\phi}} \| 
    &\leq \sum_{\ell=1}^k \| \bm{\phi} - \tilde{\bm{\phi}}^{(\ell)} \| 
    && \text{(by triangle inequality)}
    \\&\leq \sum_{\ell=1}^k \sqrt{\frac{k}{\delta} \E[\| \bm{\phi} - \tilde{\bm{\phi}}^{(\ell)} \|^2]}
    && \text{(by setting $\delta'=\delta/k$)}
    \\&\leq k \sqrt{\frac{k}{\delta} \frac{k}{m}
    \sum_{S \subseteq [n]}
    [v(S) - f_{\max}(S)]^2 \frac{p_{|S|-1}^2 |S| + p_{|S|}^2 (n-|S|)}{\mathcal{D}(S)}}. && \text{(by Equation \eqref{eq:rv_var})}
\end{align*}
Then, with probability $1-\delta$,
\begin{align*}
    \| \bm{\phi} - \tilde{\bm{\phi}} \|^2
    \leq \frac{k}{\delta} \frac{k}{m}
    \sum_{S \subseteq [n]}
    [v(S) - f_{\max}(S)]^2 \frac{p_{|S|-1}^2 |S| + p_{|S|}^2 (n-|S|)}{\mathcal{D}(S)}.
\end{align*}
The theorem statement follows by setting $m=k^2 \frac{n}{\epsilon\delta}=O(\frac{n}{\epsilon\delta})$.
\end{proof}

\clearpage

\section{Beta Shapley and Weighted Banzhaf Values}\label{appendix:more_probs}

Probabilistic values satisfy three intuitive properties:
\begin{itemize}
    \item Linearity: The probabilistic value of a linear combination of value functions is the linear combination of the probabilistic values for each value function i.e., $\phi_i(a v + b w) = a \phi_i(v) + b \phi_i(w)$ for real values $a, b \in \mathbb{R}$ and games $v,w: 2^{[n]} \to \mathbb{R}$.
    \item Null Player: The probabilistic value for a player that has no effect on any coalition is 0 i.e., if $v(S \cup \{i\}) = v(S)$ for all $S$ then $\phi_i = 0$.
    \item Symmetry: If two players contribute equally to all coalitions then they have the same probabilistic value i.e., if $v(S \cup \{i\}) = v(S \cup \{j\})$ for all $S$ then $\phi_i = \phi_j$.
\end{itemize}

In addition to these three properties, Shapley and Banzhaf values satisfy the efficiency and 2-efficiency properties, respectively:
\begin{itemize}
    \item Efficiency: The sum of probabilistic values is the difference between the whole coalition and the empty set i.e., $\sum_{i=1}^n \phi_i = v([n]) - v(\emptyset)$.
    Efficiency is desirable in settings where we want to attribute the value $v([n]) - v(\emptyset)$ to each player e.g., model prediction explanation or cost-sharing.
    \item 2-Efficiency: Let $v'$ be a game where $i$ and $j$ are combined into a single player $(i,j)$.
    That is, $v'$ is defined on $n-1$ players where the $i$ and $j$ players in $v$ are always considered together; effectively, $v'$ is only defined on subsets that contain both $i$ and $j$ or contain neither $i$ nor $j$.
    The 2-efficiency property requires that $\phi_i(v) + \phi_j(v) = \phi_{(i,j)}(v')$ for all $i,j$.
    2-Efficiency is desirable in settings where players can be combined into subgroups e.g., federated learning with client aggregation or games with alliances between players.
\end{itemize}

Shapley values are the most popular probabilistic value, in part because they are the only probabilistic value to satisfy the efficiency property \cite{shapley1951notes}.
The probabilistic weights for Shapley values are $p_{|S|} = \frac1{n} \binom{n-1}{|S|}^{-1}$, which weights all \textit{set sizes} equally.

The efficiency property is useful in settings where we want to allocate the total value of the game to the players. However, efficiency is not always appropriate especially when there are non-linear interactions between players that cannot be attributed to individuals. For these cases, a more appropriate property may be 2-efficiency which requires that probabilistic values add if players are merged.
The only probabilistic value that satisfies 2-efficiency is the Banzhaf value \cite{penrose1946elementary,banzhaf1964weighted}.
The probabilistic weight for Banzhaf values is $p_{|S|} = 1/2^{n-1}$, which weights all \textit{sets} equally.

\begin{figure}[ht]
    \centering
    \includegraphics[scale=.45]{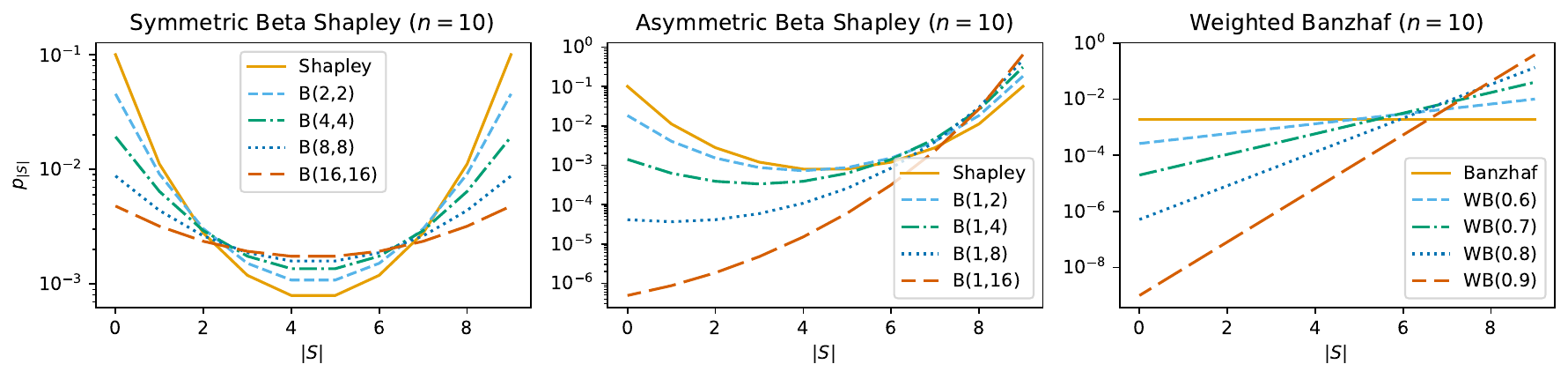}
    \caption{Probabilistic values by subset size for $n=10$. Beta Shapley values B$(\alpha, \beta)$ generalize Shapley values for $\alpha, \beta \in [1, \infty)$; increasing both $\alpha$ and $\beta$ flattens beta Shapley values while increasing just $\alpha$ (or just $\beta$) tilts beta Shapley values. Weighted Banzhaf values WB($p$) generalize Banzhaf values for $p \in (0,1)$; increasing (or decreasing) $p$ tilts weighted Banzhaf values.}
    \label{fig:prob_vals}
\end{figure}

Both Shapley and Banzhaf values have been generalized to beta Shapley values \cite{kwon2022beta} and weighted Banzhaf values \cite{li2024robust}, respectively.
Figure \ref{fig:prob_vals} plots $p_{|S|}$ for various beta Shapley and weighted Banzhaf values when $n=10$.
Beta Shapley values are defined by two parameters $\alpha, \beta \in [1, \infty)$. In particular, the probabilistic weight is
\begin{align*}
    \frac{\text{Beta}(|S| + \beta, n - |S| - 1 + \alpha)}{\text{Beta}(\alpha, \beta)}.
\end{align*}
Setting $\alpha=\beta=1$ recovers Shapley values.
Weighted Banzhaf values are defined by one parameter $p \in (0,1)$. In particular, the probabilistic weight is
\begin{align*}
    p^{|S|}(1-p)^{n-|S|-1}.
\end{align*}
Setting $p=\frac12$ recovers Banzhaf values.

\clearpage
\section{Computing Probabilistic Values of Tree-based Models}\label{appendix:tree_prob}

In this section, we show how to efficiently compute the probabilistic values of a tree when the value function is interventional feature attribution.
Unfortunately, we cannot directly generalize analogous algorithms for Shapley and Banzhaf values \cite{lundberg2017unified,lundberg2020local,karczmarz2022improved}, since they uses a property which does not hold for all probabilistic values.
In particular, prior approaches assume that if a value function only has contributions from $n' < n$ players, the Shapley/Banzhaf value on the induced game of those $n'$ players is the same as the Shapley/Banzhaf value on the original value function with all $n$ players. Our approach is based on an alternative way of viewing tree-based models that avoids the need for this property.

Consider the value function $v:2^{[n]} \to \mathbb{R}$ induced by a tree with explicand $\mathbf{x}^e$ and baseline $\mathbf{x}^b$.
We decompose $v$ into a sum of \emph{path value functions} $\bigl\{v^{P}\bigr\}_{P}$, where each $v^{P}: 2^{[n]} \to S$ corresponds to a distinct root-to-leaf path $P$.
In particular,
$$
    v(S) 
    \;=\; 
    \sum_{P} v^{P}(S),
    \qquad
    \text{where}
    \quad
    v^{P}(S) 
    = 
    \begin{cases}
        \text{leaf value of }P & \text{if } S \text{ follows path } P \text{ on } \mathbf{x}^e, \mathbf{x}^b,\\
        0 & \text{otherwise}.
    \end{cases}
$$

By the linearity property of probabilistic values, the contribution of feature $i$ to the full tree model $v$ can be expressed as the sum of its contributions to each path model $v^P$. Specifically,
$$
    \phi_i\bigl(v)
    \;=\;
    \phi_i\Bigl(\,\sum_{P} v^{P}\Bigr)
    \;=\;
    \sum_{P} \phi_i\bigl(v^{P}).
$$
Therefore, it suffices to compute $\phi_i(v^P)$ for each path model $v^P$ and aggregate their contributions over all paths. 
To this end, we will introduce the following notation.
Given probabilistic weights $\mathbf{p} = [p_0, \ldots, p_{n-1}]\in [0,1]^n$, for each path $P$, define
\begin{itemize}
    \item $S_{P}$ as the set of features in $P$ whose conditions are satisfied by $\mathbf{x}^e$ but not by $\mathbf{x}^b$,
    \item $N_{P}$ as the set of features in $P$ whose conditions are satisfied by $\mathbf{x}^b$ but not by $\mathbf{x}^e$,
    \item $\ell_P$ as the final leaf value on path $P$.
\end{itemize}

Recall we can write the $i$th probabilistic value as
\begin{align*}
    \phi_i (v)=\sum_{S \subseteq [n]: i \in S} p_{|S|-1} v(S) 
    - \sum_{S \subseteq [n]: i \notin S} p_{|S|} v(S).
\end{align*}
Using the definition of $v^P$, $S_P$, and $N_P$, we can consider $\phi_i(v^P)$ in three cases:

\begin{itemize}

    \item \textbf{Case 1: $i \in S_P$:} We need $i \in S$ in order to reach the leaf i.e., $v^P(S) = 0$ unless $S_P \subseteq S \subseteq [n] \setminus N_P$ and $i \in S$. Then,
    $$
        \phi_i(v^P) = \sum_{S_P \subseteq S \subseteq [n] \setminus N_P} p_{|S|-1} \cdot \ell_P = \sum_{l=|S_P|}^{n-|N_P|} p_{l-1} \binom{n-|N_P|-|S_P|}{l-|S_P|}  \cdot \ell_P
    $$

    \item \textbf{Case 2: $i \in N_P$:} We need $i \notin S$ in order to reach the leaf i.e., $v^P(S) = 0$ unless $S_P \subseteq S \subseteq [n] \setminus N_P$ and $i \notin S$. Then,
    $$
        \phi_i(v^P) = - \sum_{l=|S_P|}^{n-|N_P|} p_{l} \binom{n-|N_P|-|S_P|}{l-|S_P|} \cdot \ell_P
    $$

    \item \textbf{Case 3: $i \notin N_P$ and $i \notin S_P$:} We reach the leaf whether $i \in S$ or not i.e., $v^P(S) = 0$ unless $S_P \subseteq S \subseteq [n] \setminus N_P$. Then,
    \begin{align*}
        \phi_i(v^P) = &\sum_{l=|S_P|+1}^{n-|N_P|} p_{l-1} \binom{n-|N_P|-|S_P|-1}{l-|S_P|-1} \ell_P \\
        &- \sum_{l=|S_P|}^{n-|N_P|-1} p_{l} \binom{n-|N_P|-|S_P|-1}{l-|S_P|} \ell_P = 0
    \end{align*}

\end{itemize}

\subsection{TreeProb Pseudocode}

Algorithm~\ref{alg:treeprob} efficiently explores all root-to-leaf paths, maintaining counters for how many times each feature has been "seen" under the explicand ($\text{ef\_seen}$) or the baseline ($\text{bf\_seen}$). 
When a branching feature is encountered for the first time on a path, the algorithm branches into two recursive calls—one following $\mathbf{x}^e$, the other $\mathbf{x}^b$—and updates the \emph{cardinalities} $s_P := |S_P|$ or $n_P := |N_P|$ accordingly, depending on which feature value is consistent with the split.
At each leaf node, the algorithm computes the contribution based on the current $(s_P, n_P)$ and aggregates these into the overall attribution vector $\boldsymbol{\phi}$.
Algorithm~\ref{alg:treeprob} preserves the original complexity and traversal logic of Tree SHAP, while generalizing the feature-contribution calculation to the probabilistic formulation described above, making it applicable to any probabilistic value.

\begin{algorithm}[ht]
\caption{TreeProb with Interventional Feature Perturbation}
\label{alg:treeprob}
\begin{algorithmic}[1]
\STATE {\bfseries Input:} $n$: number of players; $\mathbf{p} \in [0,1]^n$: probabilistic weights
\STATE {\bfseries Output:} Exact probabilistic values $\phi_1, \ldots, \phi_n$
\vspace{6pt}

\STATE \textbf{function} {\textsc{Recurse}$(\text{node}, s_P, n_P, \text{ef\_seen}, \text{bf\_seen})$}
    \IF{\text{node is a leaf}}
        \STATE $\text{pos\_term} \gets \text{node.value} \cdot \displaystyle\sum_{l=s_P}^{\,n-n_P} p_{l-1}\binom{n - n_P - s_P}{\,l - s_P\,}$
        \STATE $\text{neg\_term} \gets -\,\text{node.value} \cdot \displaystyle\sum_{l=s_P}^{\,n-n_P} p_{l}\binom{n - n_P - s_P}{\,l - s_P\,}$
        \STATE \textbf{return} $(\text{pos\_term},\; \text{neg\_term})$
    \ENDIF

    \STATE $x_e\_\text{child} \gets 
        \begin{cases}
            \text{node.leftchild} & \text{if } x_e[\text{node.feat}] < \text{node.t} \\
            \text{node.rightchild} & \text{otherwise}
        \end{cases}$
    \STATE $x_b\_\text{child} \gets 
        \begin{cases}
            \text{node.leftchild} & \text{if } x_b[\text{node.feat}] < \text{node.t} \\
            \text{node.rightchild} & \text{otherwise}
        \end{cases}$

    \IF{$\text{ef\_seen}[\text{node.feat}] > 0$}
        \STATE \textbf{return} $\textsc{Recurse}(x_e\_\text{child},\, s_P,\, n_P,\, \text{ef\_seen},\, \text{bf\_seen})$
    \ENDIF
    \IF{$\text{bf\_seen}[\text{node.feat}] > 0$}
        \STATE \textbf{return} $\textsc{Recurse}(x_b\_\text{child},\, s_P,\, n_P,\, \text{ef\_seen},\, \text{bf\_seen})$
    \ENDIF
    \IF{$x_e\_\text{child} = x_b\_\text{child}$}
        \STATE \textbf{return} $\textsc{Recurse}(x_e\_\text{child},\, s_P,\, n_P,\, \text{ef\_seen},\, \text{bf\_seen})$
    \ELSE
        \STATE $\text{ef\_seen}[\text{node.feat}] \gets \text{ef\_seen}[\text{node.feat}] + 1$
        \STATE $(\text{pos}_e,\, \text{neg}_e) \gets \textsc{Recurse}(x_e\_\text{child},\, s_P + 1,\, n_P,\, \text{ef\_seen},\, \text{bf\_seen})$
        \STATE $\text{ef\_seen}[\text{node.feat}] \gets \text{ef\_seen}[\text{node.feat}] - 1$

        \STATE $\text{bf\_seen}[\text{node.feat}] \gets \text{bf\_seen}[\text{node.feat}] + 1$
        \STATE $(\text{pos}_b,\, \text{neg}_b) \gets \textsc{Recurse}(x_b\_\text{child},\, s_P,\, n_P + 1,\, \text{ef\_seen},\, \text{bf\_seen})$
        \STATE $\text{bf\_seen}[\text{node.feat}] \gets \text{bf\_seen}[\text{node.feat}] - 1$

        \STATE $\phi_{\text{temp}}[\text{node.feat}] \gets \phi_{\text{temp}}[\text{node.feat}] + (\text{pos}_e + \text{neg}_b)$
        \STATE \textbf{return} $(\text{pos}_e + \text{pos}_b,\; \text{neg}_e + \text{neg}_b)$
    \ENDIF
\STATE \textbf{end function}
\vspace{6pt}

\STATE \textbf{Initialize} $\boldsymbol{\phi} \gets \mathbf{0}^{n}$
\FOR{each tree $t$ in the ensemble}
    \FOR{each baseline $\mathbf{x}^b$ in baselines}
        \STATE $\phi_{\text{temp}} \gets \mathbf{0}^{n}$
        \STATE $\textsc{Recurse}(t.\text{root},\, 0,\, 0,\, \mathbf{0}^{n},\, \mathbf{0}^{n})$
        \STATE $\boldsymbol{\phi} \gets \boldsymbol{\phi} + \phi_{\text{temp}}$
    \ENDFOR
\ENDFOR
\STATE \textbf{return} $\boldsymbol{\phi} / (\text{number of trees} \times \text{number of baselines})$
\end{algorithmic}
\end{algorithm}

\clearpage
\section{Baselines}\label{appendix:baselines}

In this section, we describe the baselines we compare against for estimating probabilistic values.
These baselines broadly fall into three categories: standard Monte Carlo methods, maximum sample reuse methods, and regression methods.

For a more technical description of many of these baselines, please refer to Appendix D of \cite{li2024one}.

\textbf{Monte Carlo Methods}
The standard Monte Carlo estimator estimates each probabilistic value individually by sampling each term in the summation with probability proportional to its weight.

\emph{Weighted Sampling Lift (WSL)} \cite{kwon2022beta} is the standard Monte Carlo, but where subsets are sampled according to the Shapley weights and reweighted to produce unbiased estimates of the probabilistic values.

\emph{Permutation SHAP} \cite{castro2009polynomial} is similar to Monte Carlo estimates except that subsets are sampled in permutations; that is, the first element in the sampled permutation is one sampled subset, the first two elements are another sampled subset, and so on. Because each set \textit{size} is weighted equally by Shapley values, sampling permutations without any reweighting gives estimates that are the Shapley values in expectation. Permutation SHAP gives close to state-of-the-art performance.

\emph{Weighted SHAP} estimator \cite{kwon2022weightedshap} generalizes permutation sampling approach to probabilistic value.
Random permutations are drawn as before but now reweighted by the probabilistic value weights so that the final estimates are unbiased.

\textbf{Maximum Sample Reuse Methods}
Monte Carlo methods are unbiased but inefficient in the sense that they only use each sample to compute the estimates of one or two probabilistic values.
The key observation of Maximum Sample Reuse (MSR) estimators is that each probabilistic value $\phi_i$ can be written as two summations, one over sets that include $i$ and one over sets that do not.
Then the MSR methods use each sample to update every summation.
First used for Banzhaf values \cite{wang2023databanzhaf}, MSR has since been generalized to other probabilistic values with different sampling distributions.

\emph{Approximation without Requesting Marginals (ARM)} \cite{kolpaczki2024approximating} is a kind of MSR estimator.
Half the samples are drawn with probability $p_{|S|-1}$ while the other half are drawn with probability $p_{|S|}$.
In order to avoid the numerical instability of reweighting, the final estimate only includes the first half of samples if $i\in S$ and the second if $i \notin S$.

\emph{One sample Fits All (OFA)} \cite{li2024one} similarly uses maximum sample reuse but samples according to a more complicated distribution.

\textbf{Regression Methods}
A parallel line of work fits linear models $f$ to $v$ then returns the probabilistic values of $f$.
These approaches originate for estimating Shapley values, and are based on a linear regression problem that exactly recovers the Shapley values when solved exactly \cite{charnes1988extremal}.

\emph{Kernel SHAP} \cite{lundberg2017unified} samples subsets from this regression problem with probability proportional to their weighting in the regression problem.

\emph{Leverage SHAP} \cite{musco2024provably} similarly samples subsets but with probability proportional to their \textit{statistical leverage} in the regression problem, resulting in state-of-the-art Shapley value estimates and error bounds that depend on the fit of $f$ to $v$.

\emph{Kernel Banzhaf} \cite{liu2025kernelbanzhaffastrobust} is similar to Kernel SHAP and Leverage SHAP but estimates Banzhaf values, and is based on a regression formulation specific to Banzhaf values \cite{hammer1992approximations}.

There is a known generalization of the Shapley value regression problem \cite{ruiz1998family}, which, when solved exactly, recovers probabilistic values up to additive constant. Since Shapley values satisfy efficiency, this constant is efficient to exactly compute for Shapley values. However, for general probabilistic values, the constant depends on the entire value function $v$ and must be estimated, introducing another source of error.

The \emph{Generic Estimator based on Least Squares (GELS)} \cite{li2024faster} estimates the constant by adding a dummy variable with probabilistic value 0.
Instead of fitting a linear function $f$, GELS considers the closed-form solution to the regression problem and effectively applies a maximum sample reuse estimator to the underlying matrix-vector multiplication.
The final estimates are adjusted by subtracting the value of the dummy variable.

The \emph{Average Marginal Effect (AME)} \cite{lin2022measuring} is another regression estimator that uses a different regression formulation. For probabilistic values that satisfy a specific condition, the probabilistic values can be written as an infinitely tall regression problem.
The estimator samples this regression problem and solves the approximate version.

\clearpage

\section{Experiments on Small Datasets with Neural Network Models}\label{appendix:net_experiments}

\begin{table}[ht]
    \centering
    \caption{Summary statistics of the $\ell_2$-norm error between estimated and true probabilistic values when $m=40n$. We summarize the error over small datasets $(n < 30)$, for which the probabilistic values of a neural network model can be feasibly computed. On average over all probabilistic values, Tree MSR produces estimates with mean error that is $150\times$ lower than the best estimator from prior work.}
    \vspace{.5em}
    \resizebox{\linewidth}{!}{ 
\begin{tabular} {lcccccccccccc|c}
\hline
 & \textbf{B(1,1)} & \textbf{B(2,2)} & \textbf{B(4,4)} & \textbf{B(8,8)} & \textbf{B(1,2)} & \textbf{B(1,4)} & \textbf{B(1,8)} & \textbf{WB(0.5)} & \textbf{WB(0.6)} & \textbf{WB(0.7)} & \textbf{WB(0.8)} & \textbf{WB(0.9)} & \textbf{Mean} \\ 
\hline 
\textbf{LinearMSR} &  &  &  &  &  &  &  &  &  &  &  &  \\ 
\hspace{7pt}Mean & \cellcolor{silver!60}$\num{9.41e-04}$ & \cellcolor{silver!60}$\num{1.42e-03}$ & \cellcolor{silver!60}$\num{1.14e-03}$ & \cellcolor{silver!60}$\num{1.30e-03}$ & \cellcolor{silver!60}$\num{1.41e-02}$ & \cellcolor{bronze!60}$\num{2.16e-02}$ & \cellcolor{bronze!60}$\num{2.74e-02}$ & \cellcolor{silver!60}$\num{1.29e-03}$ & \cellcolor{silver!60}$\num{6.80e-03}$ & \cellcolor{silver!60}$\num{4.67e-03}$ & \cellcolor{silver!60}$\num{5.61e-03}$ & \cellcolor{silver!60}$\num{1.45e-02}$ & \cellcolor{silver!60}$\num{8.39e-03}$ \\ 
\hspace{7pt}1st Quartile & \cellcolor{gold!60}$\num{1.41e-07}$ & \cellcolor{gold!60}$\num{1.81e-07}$ & \cellcolor{gold!60}$\num{1.54e-07}$ & \cellcolor{gold!60}$\num{1.03e-07}$ & \cellcolor{silver!60}$\num{5.85e-05}$ & \cellcolor{silver!60}$\num{1.33e-04}$ & \cellcolor{silver!60}$\num{1.56e-04}$ & \cellcolor{gold!60}$\num{1.42e-07}$ & \cellcolor{silver!60}$\num{2.08e-05}$ & \cellcolor{silver!60}$\num{1.70e-05}$ & \cellcolor{silver!60}$\num{1.68e-05}$ & \cellcolor{silver!60}$\num{4.36e-05}$ & \cellcolor{silver!60}$\num{3.72e-05}$ \\ 
\hspace{7pt}2nd Quartile & \cellcolor{gold!60}$\num{7.43e-06}$ & \cellcolor{gold!60}$\num{1.31e-05}$ & \cellcolor{gold!60}$\num{1.16e-05}$ & \cellcolor{gold!60}$\num{7.83e-06}$ & \cellcolor{silver!60}$\num{7.26e-04}$ & \cellcolor{silver!60}$\num{1.25e-03}$ & \cellcolor{silver!60}$\num{7.70e-04}$ & \cellcolor{gold!60}$\num{1.08e-05}$ & \cellcolor{silver!60}$\num{1.58e-04}$ & \cellcolor{silver!60}$\num{2.27e-04}$ & \cellcolor{silver!60}$\num{2.85e-04}$ & \cellcolor{silver!60}$\num{4.15e-04}$ & \cellcolor{silver!60}$\num{3.23e-04}$ \\ 
\hspace{7pt}3rd Quartile & \cellcolor{silver!60}$\num{7.27e-04}$ & \cellcolor{silver!60}$\num{2.41e-03}$ & \cellcolor{silver!60}$\num{1.97e-03}$ & \cellcolor{silver!60}$\num{1.11e-03}$ & \cellcolor{silver!60}$\num{1.50e-02}$ & \cellcolor{bronze!60}$\num{2.60e-02}$ & \cellcolor{bronze!60}$\num{4.47e-02}$ & \cellcolor{silver!60}$\num{1.77e-03}$ & \cellcolor{silver!60}$\num{3.71e-03}$ & \cellcolor{silver!60}$\num{4.29e-03}$ & \cellcolor{silver!60}$\num{5.67e-03}$ & \cellcolor{silver!60}$\num{2.43e-02}$ & \cellcolor{silver!60}$\num{1.10e-02}$ \\ 
\hline 
\textbf{TreeMSR} &  &  &  &  &  &  &  &  &  &  &  &  \\ 
\hspace{7pt}Mean & \cellcolor{gold!60}$\num{3.83e-04}$ & \cellcolor{gold!60}$\num{5.36e-04}$ & \cellcolor{gold!60}$\num{5.96e-04}$ & \cellcolor{gold!60}$\num{6.46e-04}$ & \cellcolor{gold!60}$\num{4.58e-04}$ & \cellcolor{gold!60}$\num{3.59e-04}$ & \cellcolor{gold!60}$\num{2.90e-04}$ & \cellcolor{gold!60}$\num{6.61e-04}$ & \cellcolor{gold!60}$\num{4.51e-04}$ & \cellcolor{gold!60}$\num{6.36e-04}$ & \cellcolor{gold!60}$\num{3.50e-04}$ & \cellcolor{gold!60}$\num{1.26e-04}$ & \cellcolor{gold!60}$\num{4.58e-04}$ \\ 
\hspace{7pt}1st Quartile & \cellcolor{silver!60}$\num{7.28e-07}$ & \cellcolor{silver!60}$\num{1.08e-06}$ & \cellcolor{silver!60}$\num{1.90e-06}$ & \cellcolor{silver!60}$\num{1.08e-06}$ & \cellcolor{gold!60}$\num{5.35e-07}$ & \cellcolor{gold!60}$\num{2.60e-07}$ & \cellcolor{gold!60}$\num{1.11e-07}$ & \cellcolor{silver!60}$\num{1.09e-06}$ & \cellcolor{gold!60}$\num{7.71e-07}$ & \cellcolor{gold!60}$\num{6.55e-07}$ & \cellcolor{gold!60}$\num{3.69e-07}$ & \cellcolor{gold!60}$\num{4.36e-08}$ & \cellcolor{gold!60}$\num{7.19e-07}$ \\ 
\hspace{7pt}2nd Quartile & \cellcolor{silver!60}$\num{2.63e-05}$ & \cellcolor{silver!60}$\num{2.11e-05}$ & \cellcolor{silver!60}$\num{3.08e-05}$ & \cellcolor{silver!60}$\num{3.28e-05}$ & \cellcolor{gold!60}$\num{1.21e-05}$ & \cellcolor{gold!60}$\num{9.25e-06}$ & \cellcolor{gold!60}$\num{7.16e-06}$ & \cellcolor{silver!60}$\num{3.13e-05}$ & \cellcolor{gold!60}$\num{2.13e-05}$ & \cellcolor{gold!60}$\num{1.53e-05}$ & \cellcolor{gold!60}$\num{1.14e-05}$ & \cellcolor{gold!60}$\num{5.51e-06}$ & \cellcolor{gold!60}$\num{1.87e-05}$ \\ 
\hspace{7pt}3rd Quartile & \cellcolor{gold!60}$\num{2.16e-04}$ & \cellcolor{gold!60}$\num{2.45e-04}$ & \cellcolor{gold!60}$\num{2.41e-04}$ & \cellcolor{gold!60}$\num{1.69e-04}$ & \cellcolor{gold!60}$\num{1.81e-04}$ & \cellcolor{gold!60}$\num{1.34e-04}$ & \cellcolor{gold!60}$\num{1.57e-04}$ & \cellcolor{gold!60}$\num{1.78e-04}$ & \cellcolor{gold!60}$\num{1.67e-04}$ & \cellcolor{gold!60}$\num{2.44e-04}$ & \cellcolor{gold!60}$\num{1.08e-04}$ & \cellcolor{gold!60}$\num{6.15e-05}$ & \cellcolor{gold!60}$\num{1.75e-04}$ \\ 
\hline 
\textbf{OFA} &  &  &  &  &  &  &  &  &  &  &  &  \\ 
\hspace{7pt}Mean & $\num{3.17e-02}$ & $\num{3.72e-02}$ & $\num{4.79e-02}$ & \cellcolor{bronze!60}$\num{4.30e-02}$ & $\num{3.04e-02}$ & \cellcolor{silver!60}$\num{2.08e-02}$ & \cellcolor{silver!60}$\num{1.48e-02}$ & \cellcolor{bronze!60}$\num{4.48e-02}$ & $\num{2.18e-01}$ & $\num{1.96e-01}$ & $\num{4.94e-02}$ & $\num{1.55e-01}$ & $\num{7.42e-02}$ \\ 
\hspace{7pt}1st Quartile & $\num{2.29e-02}$ & $\num{2.95e-02}$ & $\num{3.27e-02}$ & $\num{3.12e-02}$ & $\num{2.35e-02}$ & $\num{1.47e-02}$ & \cellcolor{bronze!60}$\num{1.04e-02}$ & $\num{2.82e-02}$ & $\num{4.67e-02}$ & $\num{3.56e-02}$ & $\num{2.80e-02}$ & $\num{1.51e-02}$ & $\num{2.65e-02}$ \\ 
\hspace{7pt}2nd Quartile & $\num{3.04e-02}$ & $\num{3.60e-02}$ & $\num{4.95e-02}$ & $\num{4.09e-02}$ & $\num{2.97e-02}$ & $\num{2.07e-02}$ & \cellcolor{bronze!60}$\num{1.27e-02}$ & $\num{4.00e-02}$ & $\num{6.16e-02}$ & $\num{4.90e-02}$ & $\num{3.70e-02}$ & \cellcolor{bronze!60}$\num{2.43e-02}$ & $\num{3.60e-02}$ \\ 
\hspace{7pt}3rd Quartile & $\num{3.82e-02}$ & $\num{4.33e-02}$ & $\num{5.92e-02}$ & $\num{5.27e-02}$ & $\num{3.70e-02}$ & \cellcolor{silver!60}$\num{2.55e-02}$ & \cellcolor{silver!60}$\num{1.72e-02}$ & $\num{5.59e-02}$ & $\num{7.55e-02}$ & $\num{6.84e-02}$ & \cellcolor{bronze!60}$\num{5.05e-02}$ & $\num{6.45e-02}$ & \cellcolor{bronze!60}$\num{4.90e-02}$ \\ 
\hline 
\textbf{WSL} &  &  &  &  &  &  &  &  &  &  &  &  \\ 
\hspace{7pt}Mean & \cellcolor{bronze!60}$\num{1.07e-02}$ & $\num{3.67e-02}$ & \cellcolor{bronze!60}$\num{4.26e-02}$ & $\num{3.34e-01}$ & $\num{5.00e-02}$ & $\num{7.18e-02}$ & $\num{1.31e-01}$ & $\num{5.73e-01}$ & $\num{1.24e-01}$ & $\num{8.35e-02}$ & $\num{1.16e-01}$ & $\num{2.01e-01}$ & $\num{1.48e-01}$ \\ 
\hspace{7pt}1st Quartile & $\num{6.31e-05}$ & $\num{2.76e-03}$ & $\num{6.75e-03}$ & $\num{1.76e-02}$ & $\num{8.82e-03}$ & $\num{1.49e-02}$ & $\num{3.40e-02}$ & $\num{2.60e-02}$ & $\num{9.91e-03}$ & $\num{7.67e-03}$ & $\num{2.85e-02}$ & $\num{5.32e-02}$ & $\num{1.75e-02}$ \\ 
\hspace{7pt}2nd Quartile & $\num{3.76e-04}$ & $\num{1.13e-02}$ & $\num{2.30e-02}$ & $\num{4.77e-02}$ & $\num{2.38e-02}$ & $\num{3.86e-02}$ & $\num{8.61e-02}$ & $\num{7.16e-02}$ & $\num{5.91e-02}$ & $\num{5.65e-02}$ & $\num{7.62e-02}$ & $\num{1.08e-01}$ & $\num{5.02e-02}$ \\ 
\hspace{7pt}3rd Quartile & $\num{1.35e-02}$ & $\num{2.42e-02}$ & $\num{5.95e-02}$ & $\num{9.04e-02}$ & $\num{4.18e-02}$ & $\num{8.59e-02}$ & $\num{1.88e-01}$ & $\num{1.27e-01}$ & $\num{1.29e-01}$ & $\num{1.06e-01}$ & $\num{1.57e-01}$ & $\num{2.30e-01}$ & $\num{1.04e-01}$ \\ 
\hline 
\textbf{GELS} &  &  &  &  &  &  &  &  &  &  &  &  \\ 
\hspace{7pt}Mean & $\num{1.82e-01}$ & $\num{1.16e-01}$ & $\num{1.14e-01}$ & $\num{1.15e-01}$ & $\num{8.83e-02}$ & $\num{8.02e-02}$ & $\num{6.83e-02}$ & $\num{1.20e-01}$ & $\num{7.42e-02}$ & $\num{5.65e-02}$ & \cellcolor{bronze!60}$\num{4.82e-02}$ & $\num{5.59e-02}$ & $\num{9.33e-02}$ \\ 
\hspace{7pt}1st Quartile & $\num{8.97e-02}$ & $\num{5.91e-02}$ & $\num{6.20e-02}$ & $\num{5.60e-02}$ & $\num{4.79e-02}$ & $\num{3.41e-02}$ & $\num{3.54e-02}$ & $\num{5.94e-02}$ & $\num{3.33e-02}$ & $\num{3.03e-02}$ & $\num{2.88e-02}$ & $\num{3.07e-02}$ & $\num{4.72e-02}$ \\ 
\hspace{7pt}2nd Quartile & $\num{1.42e-01}$ & $\num{9.13e-02}$ & $\num{8.73e-02}$ & $\num{8.24e-02}$ & $\num{7.61e-02}$ & $\num{5.63e-02}$ & $\num{5.23e-02}$ & $\num{9.24e-02}$ & $\num{5.07e-02}$ & $\num{4.33e-02}$ & $\num{3.85e-02}$ & $\num{4.27e-02}$ & $\num{7.13e-02}$ \\ 
\hspace{7pt}3rd Quartile & $\num{2.26e-01}$ & $\num{1.26e-01}$ & $\num{1.35e-01}$ & $\num{1.45e-01}$ & $\num{1.11e-01}$ & $\num{9.56e-02}$ & $\num{7.34e-02}$ & $\num{1.63e-01}$ & $\num{8.37e-02}$ & $\num{6.92e-02}$ & $\num{6.24e-02}$ & $\num{7.33e-02}$ & $\num{1.14e-01}$ \\ 
\hline 
\textbf{ARM} &  &  &  &  &  &  &  &  &  &  &  &  \\ 
\hspace{7pt}Mean & $\num{2.06e-01}$ & $\num{1.04e-01}$ & $\num{1.68e-01}$ & $\num{1.27e-01}$ & $\num{7.41e-02}$ & $\num{5.13e-02}$ & $\num{4.91e-02}$ & $\num{9.18e-02}$ & $\num{6.13e-02}$ & $\num{4.86e-02}$ & $\num{5.65e-02}$ & \cellcolor{bronze!60}$\num{4.04e-02}$ & $\num{8.98e-02}$ \\ 
\hspace{7pt}1st Quartile & $\num{4.13e-02}$ & $\num{3.69e-02}$ & $\num{3.32e-02}$ & $\num{2.80e-02}$ & $\num{3.72e-02}$ & $\num{3.36e-02}$ & $\num{2.77e-02}$ & $\num{2.83e-02}$ & $\num{2.80e-02}$ & $\num{3.30e-02}$ & $\num{3.54e-02}$ & $\num{2.20e-02}$ & $\num{3.20e-02}$ \\ 
\hspace{7pt}2nd Quartile & $\num{6.79e-02}$ & $\num{6.28e-02}$ & $\num{4.76e-02}$ & $\num{4.57e-02}$ & $\num{5.77e-02}$ & $\num{4.82e-02}$ & $\num{4.28e-02}$ & $\num{3.69e-02}$ & $\num{4.33e-02}$ & $\num{4.01e-02}$ & $\num{4.64e-02}$ & $\num{3.02e-02}$ & $\num{4.75e-02}$ \\ 
\hspace{7pt}3rd Quartile & $\num{9.67e-02}$ & $\num{9.79e-02}$ & $\num{8.27e-02}$ & $\num{6.68e-02}$ & $\num{9.99e-02}$ & $\num{5.86e-02}$ & $\num{6.62e-02}$ & $\num{5.70e-02}$ & $\num{7.98e-02}$ & \cellcolor{bronze!60}$\num{6.26e-02}$ & $\num{6.33e-02}$ & \cellcolor{bronze!60}$\num{4.62e-02}$ & $\num{7.31e-02}$ \\ 
\hline 
\textbf{WeightedSHAP} &  &  &  &  &  &  &  &  &  &  &  &  \\ 
\hspace{7pt}Mean & $\num{1.24e-01}$ & \cellcolor{bronze!60}$\num{1.04e-02}$ & $\num{5.98e-02}$ & $\num{1.33e-01}$ & \cellcolor{bronze!60}$\num{1.58e-02}$ & $\num{3.36e-02}$ & $\num{7.52e-02}$ & $\num{1.51e-01}$ & \cellcolor{bronze!60}$\num{5.43e-02}$ & \cellcolor{bronze!60}$\num{4.40e-02}$ & $\num{6.43e-02}$ & $\num{6.64e-02}$ & \cellcolor{bronze!60}$\num{6.93e-02}$ \\ 
\hspace{7pt}1st Quartile & \cellcolor{bronze!60}$\num{1.82e-05}$ & \cellcolor{bronze!60}$\num{1.47e-03}$ & \cellcolor{bronze!60}$\num{2.45e-03}$ & \cellcolor{bronze!60}$\num{4.07e-03}$ & \cellcolor{bronze!60}$\num{1.01e-03}$ & \cellcolor{bronze!60}$\num{4.46e-03}$ & $\num{1.22e-02}$ & \cellcolor{bronze!60}$\num{1.23e-02}$ & \cellcolor{bronze!60}$\num{6.90e-03}$ & \cellcolor{bronze!60}$\num{3.82e-03}$ & \cellcolor{bronze!60}$\num{1.38e-02}$ & \cellcolor{bronze!60}$\num{6.67e-03}$ & \cellcolor{bronze!60}$\num{5.76e-03}$ \\ 
\hspace{7pt}2nd Quartile & \cellcolor{bronze!60}$\num{2.94e-04}$ & \cellcolor{bronze!60}$\num{3.48e-03}$ & \cellcolor{bronze!60}$\num{1.26e-02}$ & \cellcolor{bronze!60}$\num{1.20e-02}$ & \cellcolor{bronze!60}$\num{4.98e-03}$ & \cellcolor{bronze!60}$\num{1.85e-02}$ & $\num{5.33e-02}$ & \cellcolor{bronze!60}$\num{2.96e-02}$ & \cellcolor{bronze!60}$\num{1.90e-02}$ & \cellcolor{bronze!60}$\num{3.07e-02}$ & \cellcolor{bronze!60}$\num{3.59e-02}$ & $\num{2.85e-02}$ & \cellcolor{bronze!60}$\num{2.07e-02}$ \\ 
\hspace{7pt}3rd Quartile & \cellcolor{bronze!60}$\num{4.58e-03}$ & \cellcolor{bronze!60}$\num{1.42e-02}$ & \cellcolor{bronze!60}$\num{3.72e-02}$ & \cellcolor{bronze!60}$\num{3.56e-02}$ & \cellcolor{bronze!60}$\num{1.74e-02}$ & $\num{4.74e-02}$ & $\num{1.10e-01}$ & \cellcolor{bronze!60}$\num{5.53e-02}$ & \cellcolor{bronze!60}$\num{7.07e-02}$ & $\num{6.71e-02}$ & $\num{8.94e-02}$ & $\num{9.13e-02}$ & $\num{5.33e-02}$ \\ 
\hline
\end{tabular}}
    \label{tab:probabilistic_small_n}
\end{table}

\begin{figure}[ht]
    \centering
    \includegraphics[width=\linewidth]{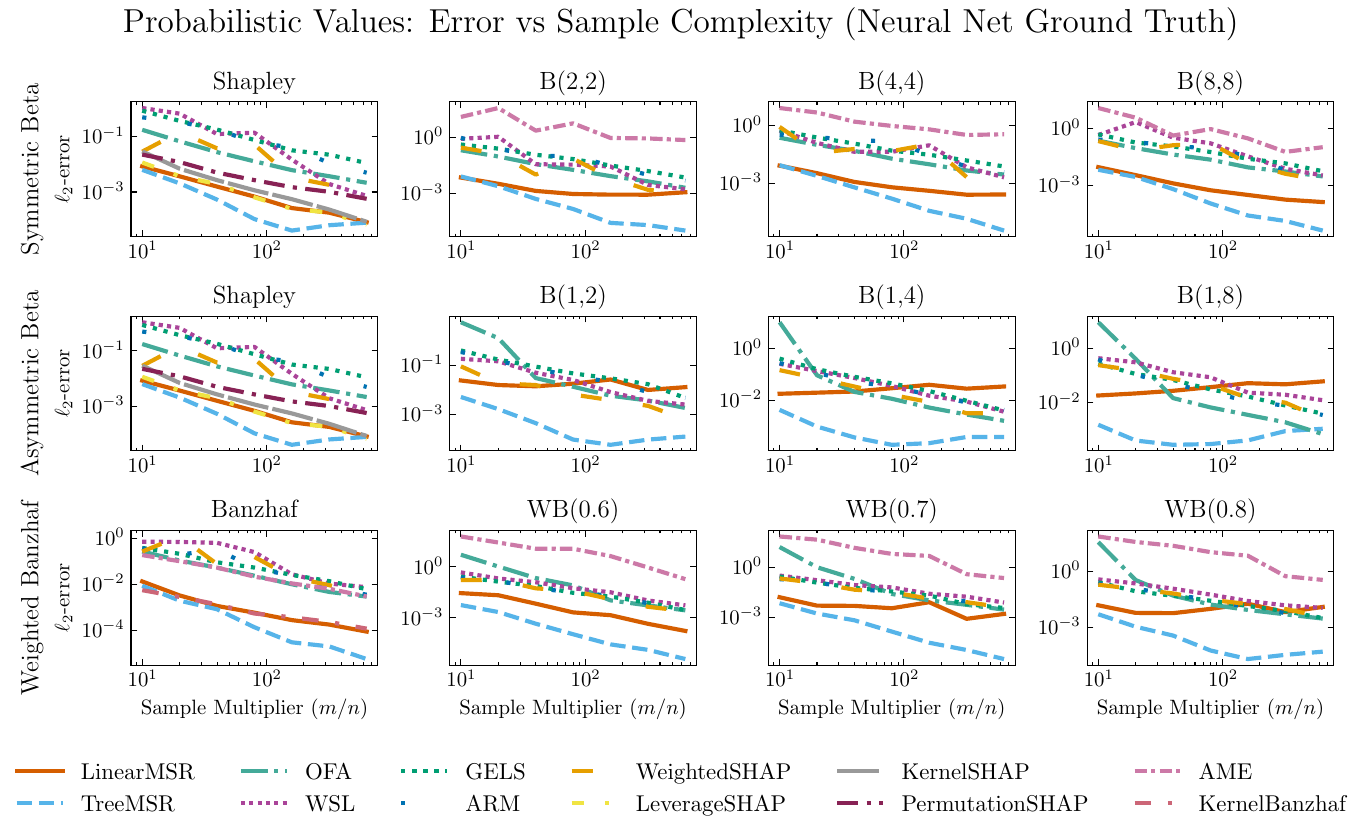}
    \caption{Error between the estimated and true probabilistic values by complexity. Each subplot shows results for a different probabilistic value with the error averaged over all large datasets ($n \geq 30$). The lines report the mean error over 10 runs. Tree and Linear MSR give the best performance, often by several orders of magnitude especially when the number of samples is large.}
    \label{fig:prob_complexity_small_n}
\end{figure}

\clearpage

\section{Experiments by Noise}\label{appendix:experiments_noise}

In many settings, access to the value function is noisy. For example, $v$ may be the expectation over a distribution that is expensive to exactly compute. Instead, we may estimate the expectation, and hence the values we observe are noisy. In this experiment, we add normally distributed noise to the values passed into each estimator. The plots show the performance of each estimator by the magnitude of this noise.

\begin{figure}[ht]
    \centering
    \includegraphics[width=\linewidth]{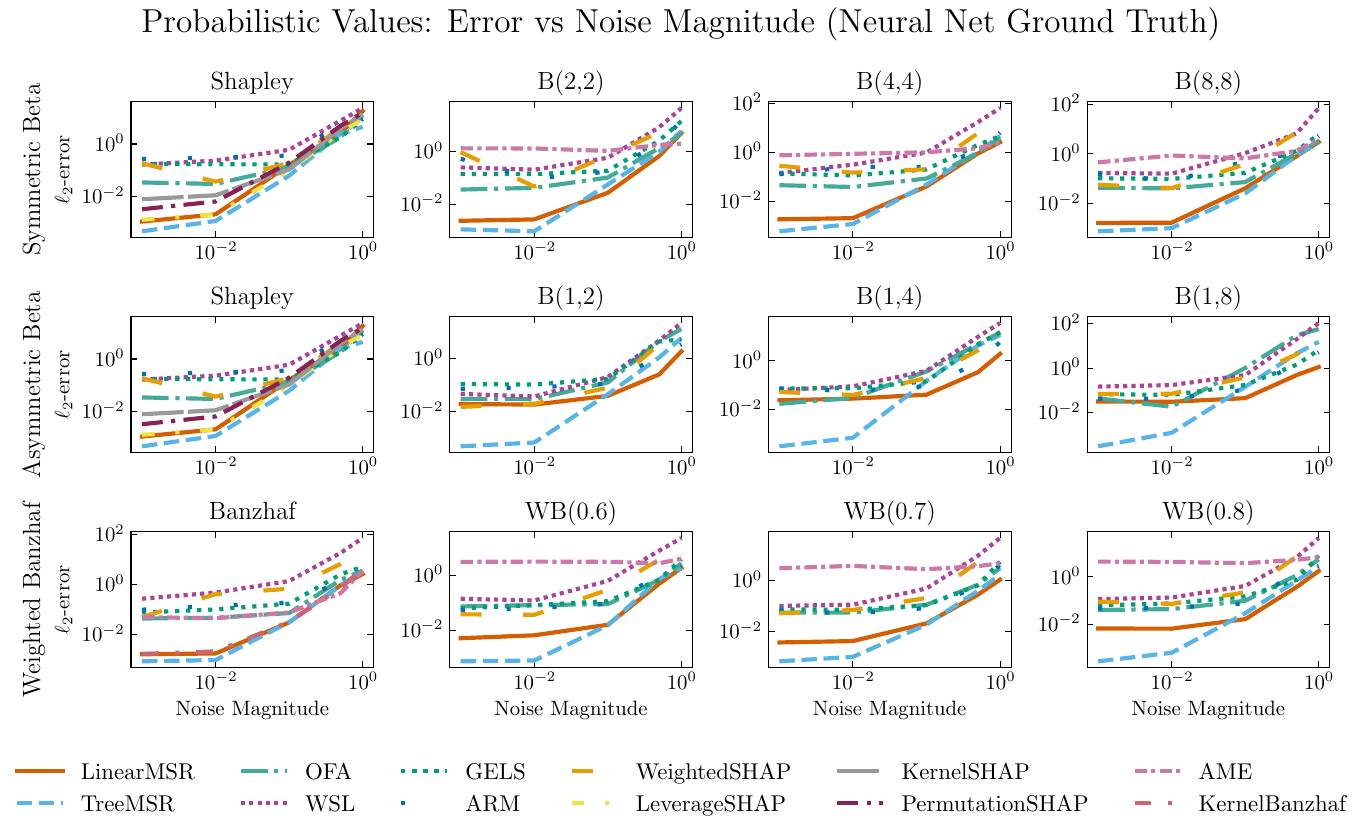}
    \caption{Error between the estimated and true probabilistic values as a function of noise magnitude. Each subplot shows results for a different probabilistic value with the error averaged over all small datasets ($n < 30$). The lines report the mean error over 10 runs. Tree MSR gives the best performance, often by several orders of magnitude especially when the magnitude of the noise is small.}
    \label{fig:prob_complexity_small_n_noise}
\end{figure}

\begin{figure}[ht]
    \centering
    \includegraphics[width=\linewidth]{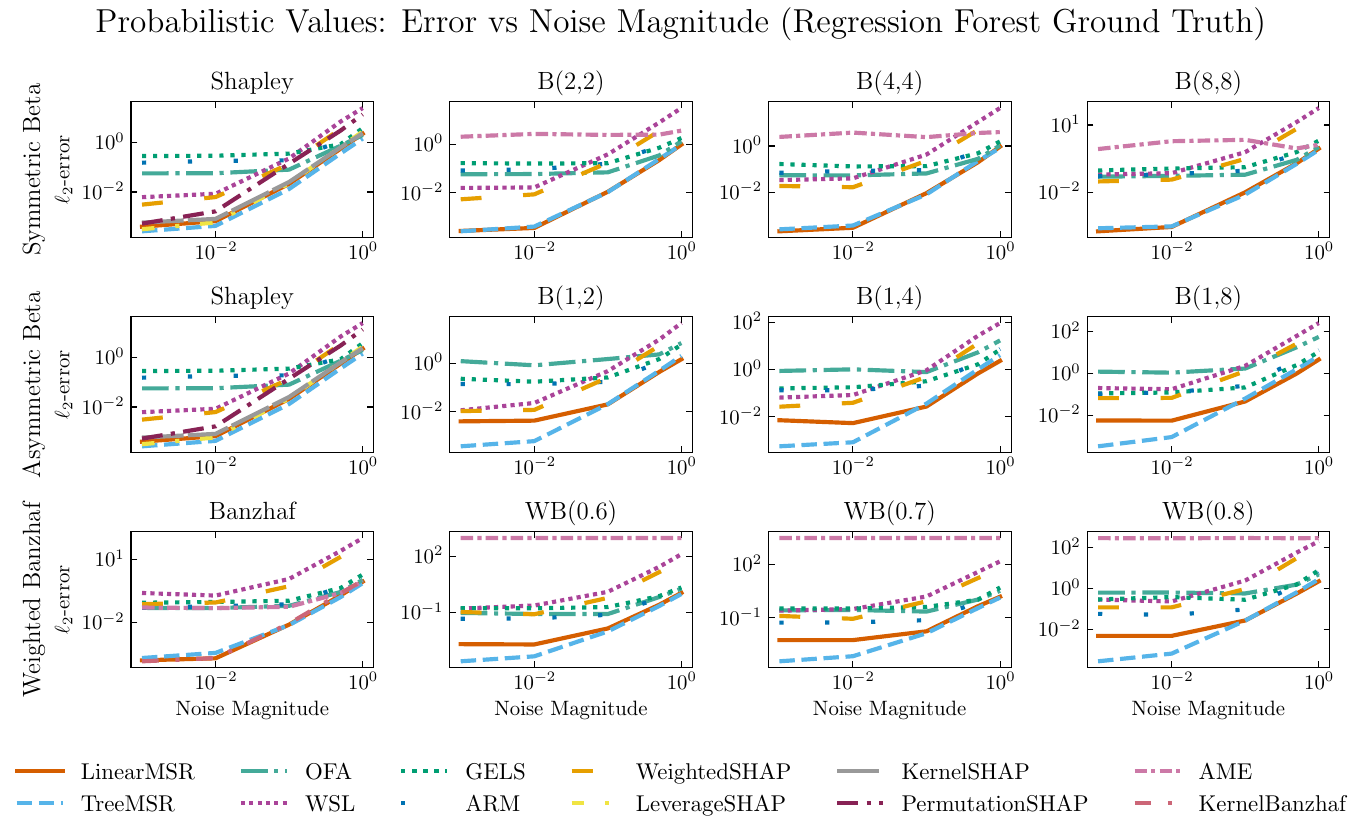}
    \caption{Error between the estimated and true probabilistic values as a function of noise magnitude. Each subplot shows results for a different probabilistic value with the error averaged over all large datasets ($n \geq 30$). The lines report the mean error over 10 runs. Tree MSR gives the best performance, often by several orders of magnitude especially when the magnitude of the noise is small.}
    \label{fig:prob_complexity_big_n_noise}
\end{figure}

\clearpage

\section{Dataset Descriptions}\label{appendix:datasets}

\begin{table}[ht]
\centering
\caption{A summary of the datasets used in our experiments, including source, access method, license, and number of features \(n\).}
\small
\begin{tabular}{l|l|l|l|l}
\toprule
\textbf{Dataset} & $n$ & \textbf{Source / Citation} & \textbf{Access Method} & \textbf{License} \\
\midrule
\textbf{Adult} & 12 & \cite{kohavi1996scaling} & \texttt{shap.datasets} & CC-BY 4.0 \\
\textbf{Forest Fires} & 13 & \cite{cortez2007data} & UCI ML Repo\tablefootnote{\url{https://archive.ics.uci.edu/ml/datasets/forest+fires}} & CC-BY 4.0 \\
\textbf{Real Estate} & 15 & \cite{yeh2009comparisons} & UCI ML Repo\tablefootnote{\url{https://archive.ics.uci.edu/ml/datasets/Real+estate+valuation+data+set}} & CC-BY 4.0 \\
\textbf{Bike Sharing} & 16 & \cite{fanaee2014event} & OpenML\tablefootnote{\url{https://www.openml.org/d/42712}} & Public Domain \\
\textbf{Breast Cancer} & 30 & \cite{street1993nuclear} & \texttt{sklearn.datasets} & CC-BY 4.0 \\
\textbf{Independent} & 60 & \cite{lundberg2017unified} & \texttt{shap.datasets} & MIT \\
\textbf{NHANES} & 79 & \cite{cdc_nhanes} & \texttt{shap.datasets} & Public Domain \\
\textbf{Communities} & 101 & \cite{redmond2002communities} & \texttt{shap.datasets} & CC-BY 4.0 \\
\bottomrule
\end{tabular}
\end{table}

\end{document}